\documentclass[conference]{IEEEtran}

\usepackage{amsmath,amssymb,amsthm,mathtools}
\usepackage{booktabs}
\usepackage{multirow}
\usepackage{algorithm}
\usepackage{algpseudocode}
\usepackage{tikz}
\usepackage{xcolor}
\usepackage{url}
\usepackage{hyperref}
\usepackage{enumitem}
\usepackage{array}
\usepackage{graphicx}
\usepackage{balance}

\hypersetup{
    colorlinks=true,
    linkcolor=blue,
    citecolor=blue,
    urlcolor=blue
}

\usetikzlibrary{arrows.meta,positioning,shapes.geometric,fit}

\newtheorem{definition}{Definition}[section]
\newtheorem{theorem}{Theorem}[section]
\newtheorem{lemma}[theorem]{Lemma}
\newtheorem{proposition}[theorem]{Proposition}
\newtheorem{corollary}[theorem]{Corollary}
\theoremstyle{remark}
\newtheorem{remark}{Remark}[section]

\newcommand{\Auth}{\mathsf{Auth}}

\newcommand{\Exec}{\mathsf{Exec}}
\newcommand{\Verify}{\mathsf{Verify}}
\newcommand{\Sign}{\mathsf{Sign}}
\newcommand{\KeyGen}{\mathsf{KeyGen}}
\newcommand{\negl}{\mathsf{negl}}
\newcommand{\Adv}{\mathsf{Adv}}
\newcommand{\SCMH}{\mathsf{SCMH}}

\title{AuthGuard-R: Safety-Compliant Mission Hijacking and Dual-Gate Defense for LLM-Controlled Robots}

\author{
	\IEEEauthorblockN{Saidattu Chepuri}
	\IEEEauthorblockA{
		Department of Computer Science and Engineering\\
		National Institute of Technology Warangal\\
		Warangal, India\\
		Email: saidattu3607@gmail.com
	}
	\and
	\IEEEauthorblockN{Vikas Srivastava}
	\IEEEauthorblockA{
		Department of Mathematics\\
		National Institute of Technology Warangal\\
		Warangal, India\\
		Email: vsv@nitw.ac.in, vikas.math123@gmail.com
	}
}

\begin{document}
\maketitle

\begin{abstract}
Large language models are increasingly used as high-level planners for mobile robots, robot manipulators, and autonomous vehicles. Recent studies show that these systems can be influenced through malicious text, speech, visual instructions, retrieved documents, and poisoned sensory context. Most defenses ask whether a proposed action is physically safe. This paper studies a different problem: an action may be physically safe and still violate the mission authorized by the user. An attacker may redirect a delivery robot, replace an approved object, extend a robot's operating region, activate an unnecessary sensor, or delay a mission without creating an immediate physical hazard. We call this attack \emph{safety-compliant mission hijacking}. We propose MissionPAIR, an adaptive attack framework that searches for executable plans that pass a safety gate while violating an authenticated mission. We also propose AuthGuard-R, a deterministic authorization layer that binds every executable action to a signed mission, robot identity, object and region scope, current state, time, and input provenance. AuthGuard-R operates with an independent safety gate, giving a dual-gate architecture. We formalize mission policies over robot traces, define security games, and prove authorization soundness, mission non-escalation, replay resistance, robot binding, provenance separation, threshold-approval security, audit-log tamper evidence, and trace-level composition. We report a preliminary cross-model evaluation with Claude Haiku~4.5 and the open-source Qwen2.5~7B planner. Across 240 live attack trials, the planners followed an injected mission deviation in 109 trials; AuthGuard-R rejected all 109 resulting unauthorized actions. A separate hand-constructed suite of eleven protocol- and policy-level attacks was also blocked completely. The two planners exhibit substantially different attack-compliance profiles, while the deterministic authorization decision remains unchanged. These results are preliminary and do not replace the larger simulator, ROS~2, and full-baseline evaluation described in the methodology.
\end{abstract}

\begin{IEEEkeywords}
LLM-controlled robots, embodied AI security, prompt injection, robot authorization, mission integrity, formal security, runtime guardrails, ROS 2, digital signatures
\end{IEEEkeywords}

\section{Introduction}
Large language models (LLMs) and vision-language models (VLMs) can convert natural-language instructions into high-level robot plans. A user may say ``bring the medicine from the table to the nurse,'' and the model may produce a sequence containing navigation, object detection, grasping, and delivery. This interface is useful because the user does not need to write low-level controller code. The same flexibility, however, creates a security boundary. The model receives information from many channels and its output can cause physical motion.

A conventional software chatbot can produce a wrong answer without directly moving a machine. An embodied agent is different. Its output may open a door, move an object, enter a room, record a video, or stop an industrial process. A malicious instruction hidden in a visible sign, an audio message, a retrieved document, or a sensor description may therefore become a physical command. The problem is more serious when the model can call robot tools without an independent authorization check.

Recent work has shown that embodied LLM agents can be jailbroken or manipulated. BadRobot studied malicious physical-action queries and demonstrated attacks on real robot platforms~\cite{badrobot}. RoboPAIR automated the search for jailbreak prompts against LLM-controlled robots~\cite{robopair}. RoboGuard introduced a formal safety layer that checks and repairs unsafe plans before execution~\cite{roboguard}. Other studies expanded the attack surface to visual inputs, audio, sensor context, action-level manipulation, and semantic denial of service~\cite{ripa,blindfold,semanticdos}.

These papers establish an important fact: the LLM cannot be treated as a trusted controller. However, most existing defenses focus on \emph{safety}. They ask whether an action causes a collision, violates a force limit, enters a dangerous state, or breaks a safety rule. Safety is essential, but it is not the same as authorization. A robot can perform a harmless movement that the user never approved.

Consider a delivery robot that is authorized to carry package $P_1$ from Desk A to Room 101. A malicious sign says that the destination has changed to Room 103. The robot can reach Room 103 without collision and without exceeding any speed limit. A safety-only defense may therefore accept the plan. Nevertheless, the user authorized Room 101, not Room 103. The robot has completed a safe but unauthorized mission.

The same gap appears in several forms. A robot arm may safely move the wrong container. A mobile robot may safely photograph a restricted office instead of a public corridor. A service robot may activate a camera although the task needs only navigation. An attacker may repeatedly inject false warnings and keep the robot stopped. None of these examples necessarily produces immediate physical harm, but each violates mission integrity, privacy, capability boundaries, or availability.

This paper asks the following question:

\begin{quote}
\emph{How can an LLM-controlled robot ensure that every executed action is both physically safe and cryptographically authorized by the original mission?}
\end{quote}

We separate these requirements. The LLM is treated as an untrusted proposal generator. It can suggest how to complete a task, but it cannot grant itself new authority. A signed mission token defines the permitted robot, actions, objects, regions, limits, time window, and approval level. A deterministic authorization gate checks each proposed action against this token. An independent safety gate checks physical safety. Execution occurs only when both gates accept.

The paper makes six connected contributions. First, it defines \emph{safety-compliant mission hijacking} as a distinct attack class in which the adversary deliberately avoids obvious physical harm and instead changes the authenticated task. This distinction matters because a conventional safety monitor may accept an action that is physically harmless but outside the user's authority.

Second, the paper introduces a formal mission language over robot states, normalized actions, and execution traces. The language represents action type, object identity, region, numeric limits, time, state conditions, robot identity, and provenance. It therefore permits a precise comparison between the signed mission and the plan generated by the LLM.

Third, the paper proposes MissionPAIR, an adaptive attack framework whose objective rewards executable and safety-compliant mission deviation. MissionPAIR is designed to expose the gap between physical safety and mission integrity rather than to repeat standard harmful-action jailbreak experiments.

Fourth, the paper proposes AuthGuard-R, a stateful authorization gate based on signed mission tokens, provenance labels, nonce and expiry checks, robot binding, risk-adaptive approval, and hash-chained audit records. The LLM remains free to propose a plan, but it cannot enlarge the authority encoded in the signed mission.

Fifth, the paper defines formal security games and proves authorization soundness, completeness, mission non-escalation, replay resistance, robot binding, provenance separation, threshold-approval security, audit-log tamper evidence, and dual-gate composition over complete execution traces. These results make explicit which guarantees follow from cryptography and which depend on the correctness of the safety and state models.

Finally, the paper gives a detailed implementation and evaluation plan that reuses the public code and task structures of RoboPAIR, RoboGuard, BadRobot, ROS 2, and recent embodied-agent benchmarks. Reuse reduces engineering time, while the new action schema, attack objective, authorization layer, security games, and experiments preserve an independent research contribution. In addition, the paper reports a cross-model preliminary evaluation using Claude Haiku~4.5 and Qwen2.5~7B under both fixed-template and adaptive MissionPAIR attacks. The two planners show sharply different rates and categories of attack compliance, but every unauthorized action produced in the reported live experiments is rejected by AuthGuard-R.

The present paper therefore combines formal design, security analysis, and preliminary empirical validation. The full benchmark---including reproduction of RoboPAIR, RoboGuard, and BadRobot at their intended scale and integration with a ROS~2 or simulator execution loop---remains future work. The formal claims follow from the stated assumptions, whereas Section~\ref{sec:pilot} reports a bounded cross-model feasibility evaluation and clearly separates observed results from claims that still require large-scale validation.

\section{Background and Related Work}
\subsection{LLMs as Robot Planners}
LLMs and VLMs are increasingly used to map natural-language goals and visual observations into symbolic plans, API calls, or executable programs. The model may decide which object to select, which route to use, and which tool to call. This creates a useful semantic layer above classical perception and control. However, the model is probabilistic, sensitive to context, and often connected to untrusted information. Wu et al. showed that small changes in input can reduce the success of LLM/VLM-controlled robots~\cite{vulnerability}.

A typical architecture has at least four layers: a user interface, an LLM or VLM planner, a tool or action translator, and a low-level robot controller. Security failures can occur at every boundary. The user instruction may be ambiguous. The planner may follow an injected instruction. The action translator may map a vague plan to an excessive capability. The controller may execute a valid command in an unsafe state. A complete defense therefore needs more than one filter.

\subsection{Jailbreaking and Prompt Injection}
Prompt injection places attacker-controlled instructions inside content consumed by the model~\cite{owasp}. For an embodied system, such content may arrive through direct text, optical character recognition, speech transcription, web retrieval, robot memory, or another agent. BadRobot developed malicious physical-action queries and demonstrated that embodied models may produce harmful behavior~\cite{badrobot}. RoboPAIR automated prompt search in black-box, grey-box, and white-box settings~\cite{robopair}. Blindfold studied action-level manipulation, where apparently harmless language can hide an unsafe physical effect~\cite{blindfold}.

RIPA considered sensory-vector prompt injection in a ROS 2 setting, including OCR-visible text, audio, and fabricated sensor context~\cite{ripa}. Semantic denial-of-service attacks show another risk: an attacker may keep the robot stopped or distracted through false but plausible warnings~\cite{semanticdos}. These studies motivate a strong threat model in which the planner and its untrusted context may be fully adversarial.

\subsection{Safety and Runtime Governance}
RoboGuard grounds safety rules in the environment and applies formal reasoning before execution~\cite{roboguard}. SafeGate uses pre-execution checks and task safety contracts containing guards, invariants, and abort conditions~\cite{safegate}. Modular guardrail work argues that action safety, decision safety, and human-centered safety should be separated~\cite{modularguardrails}. Runtime-governance frameworks similarly place policy enforcement outside the model's internal reasoning process~\cite{runtimegov}.

These systems support the main architectural choice in this paper: a security decision should not depend only on the same LLM that produced the plan. The present work is complementary. A safety gate checks whether an action is safe under a physical or semantic safety model. AuthGuard-R checks whether the action is within the cryptographically authenticated mission.

\subsection{Benchmarks and Evaluation Frameworks}
SafeAgentBench evaluates whether embodied LLM agents can plan safe actions in interactive environments~\cite{safeagentbench}. Its tasks are useful because they combine ordinary task completion with explicit hazards, making it possible to measure both utility and safety. RoboJailBench follows a related direction by providing paired benign and adversarial goals and by reporting security together with task utility~\cite{robojailbench}. EmbodiedGovBench broadens the evaluation to capability boundaries, human override, recovery behavior, policy portability, and audit completeness~\cite{govbench}. These benchmarks show that a defense should not be judged only by the number of attacks it blocks. It must also preserve valid work, explain rejections, and recover safely after a failed action.

The existing benchmarks nevertheless leave room for a more specific authorization study. A task can be safe according to a benchmark and still be inconsistent with the mission authenticated by the operator. For example, two delivery destinations may both be physically safe, but only one is authorized. Similarly, recording a public corridor and recording a private office may use the same safe camera action while having very different authorization status. The benchmark proposed in this paper therefore pairs each signed mission with one or more safe but unauthorized alternatives. The pair design makes the success condition easy to audit because the physical action remains plausible while one mission field is deliberately changed.

The proposed evaluation also separates three measurements that are often combined. The first is physical safety, decided by the selected safety baseline. The second is mission authorization, decided by the deterministic policy checker. The third is useful task completion, decided by an execution or semantic evaluator. Reporting these measurements separately prevents a system from appearing secure merely because it stops the robot, and it prevents a system from appearing useful when it completes the wrong mission.

\subsection{Cryptographic Authorization}
Digital signatures provide authenticity and integrity for structured messages. A valid signature shows that a message was approved by the holder of a signing key and was not modified after signing. Ed25519 is a practical classical signature scheme~\cite{rfc8032}. ML-DSA is a standardized post-quantum signature option~\cite{fips204}. Signed tokens and access-control claims are widely used in distributed systems, for example in JSON Web Tokens~\cite{jwt}.

A signature does not prove that a mission is safe or correctly written. It also does not prevent misuse if the signed policy is too broad. Therefore, cryptography must be combined with a narrow mission schema, state-aware policy checks, provenance control, and an independent safety mechanism.

\subsection{Research Gap}
Table~\ref{tab:gap} summarizes the main difference between existing work and AuthGuard-R. Attack papers mainly ask whether an LLM can be induced to produce harmful behavior. Safety papers ask whether a plan violates safety constraints. General governance papers check broad policy compliance. This paper focuses on a narrower and cryptographically testable question: whether every action belongs to a mission signed by an authorized principal.

\begin{table}[t]
\centering
\caption{Relationship between recent work and the proposed problem.}
\label{tab:gap}
\resizebox{\columnwidth}{!}{%
\begin{tabular}{lcccc}
\toprule
Work & Jailbreak & Physical safety & Signed mission scope & Provenance binding \\
\midrule
BadRobot~\cite{badrobot} & Yes & Evaluation & No & Limited \\
RoboPAIR~\cite{robopair} & Yes & Attack target & No & No \\
RoboGuard~\cite{roboguard} & Adaptive evaluation & Yes & No & Limited \\
SafeGate~\cite{safegate} & Defective commands & Yes & Partial & Limited \\
Runtime governance~\cite{runtimegov} & Policy violations & Yes & General & General \\
AuthGuard-R & Mission hijacking & Complementary & Yes & Yes \\
\bottomrule
\end{tabular}}
\end{table}

\section{Preliminaries and Notation}
\subsection{Security Parameter and Negligible Functions}
Let $\kappa\in\mathbb{N}$ denote the security parameter. All key-generation algorithms receive $1^\kappa$ as input, and all probabilistic adversaries are restricted to expected polynomial time in $\kappa$. A function $\mu:\mathbb{N}\rightarrow\mathbb{R}_{\geq 0}$ is negligible when, for every positive polynomial $p(\cdot)$, there exists $\kappa_0$ such that $\mu(\kappa)<1/p(\kappa)$ for all $\kappa>\kappa_0$. We write $\negl(\kappa)$ for an unspecified negligible function.

This asymptotic notation is used only for the cryptographic parts of AuthGuard-R. It describes the probability that an efficient attacker forges a signature, finds a relevant hash collision, or satisfies a threshold-approval condition without the required keys. It does not describe perception errors, localization errors, model hallucinations, or software bugs. Those quantities are empirical and must be measured separately.

The distinction is important in robotic systems. A signature scheme may have negligible forgery probability while the complete system still fails because the mission compiler selected the wrong object or because the state estimator confused two rooms. The formal results in this paper therefore state both their cryptographic assumptions and their trusted engineering assumptions.

\subsection{Digital Signature Scheme}
A digital signature scheme is a triple of probabilistic polynomial-time algorithms $\Sigma=(\KeyGen,\Sign,\Verify)$. Key generation produces a secret and public key pair,
\begin{equation}
(sk_U,pk_U)\leftarrow\KeyGen(1^\kappa).
\end{equation}
The operator signs a canonical encoding $m$ of a structured mission by computing $\sigma\leftarrow\Sign_{sk_U}(m)$. Verification returns a bit $\Verify_{pk_U}(m,\sigma)\in\{0,1\}$.

Correctness requires that a signature generated with the matching secret key verifies except with negligible probability. Security is modeled through existential unforgeability under chosen-message attack (EUF-CMA). Informally, even after receiving signatures on missions of its choice, an efficient adversary should not produce a valid signature on a new mission encoding.

Canonical encoding is essential. The same structured mission must always produce the same byte string, and two different policies should not be accepted as the same encoded object. AuthGuard-R therefore signs a domain-separated hash of a versioned schema rather than raw natural-language text. The natural-language request is retained for human explanation, but the structured encoding controls execution.

\subsection{Hash Function}
A hash function is modeled as an efficiently computable map $H:\{0,1\}^*\rightarrow\{0,1\}^\ell$. The security proofs use collision resistance: it should be computationally infeasible to find distinct inputs $x\neq x'$ for which $H(x)=H(x')$. The mission signature is computed over
\begin{equation}
H(\mathsf{tag}\|\mathsf{Encode}(M)),
\end{equation}
where $\mathsf{tag}$ is a fixed domain-separation string and $\mathsf{Encode}$ is the canonical mission encoder.

Domain separation prevents a digest created for another protocol object from being interpreted as a robot mission. Version information inside $\mathsf{Encode}(M)$ also prevents silent changes to the schema. The same hash function is used in the audit chain, but the inputs include a different tag and record structure. These conventions are simple implementation details, yet they are necessary for the reduction arguments in the security section.

\subsection{Robot State, Actions, and Traces}
Let $\mathcal{X}$ be the set of robot and environment states, and let $\mathcal{A}$ be a finite or countable set of normalized robot actions. The transition function is
\begin{equation}
\delta:\mathcal{X}\times\mathcal{A}\rightarrow\mathcal{X}\cup\{\bot\},
\label{eq:transition}
\end{equation}
where $\delta(s,a)=\bot$ means that action $a$ is not executable in state $s$.

A plan is a finite action sequence
\begin{equation}
P=(a_1,a_2,\ldots,a_\ell).
\end{equation}
Starting from state $s_0$, the corresponding trace is
\begin{equation}
\tau=(s_0,a_1,s_1,a_2,\ldots,a_\ell,s_\ell),
\end{equation}
where $s_i=\delta(s_{i-1},a_i)$ for every $i$. The trace is executable when no transition equals $\bot$.

\subsection{Safety and Authorization Policies}
A safety policy is a predicate
\begin{equation}
\mathcal{S}:\mathcal{X}\times\mathcal{A}\rightarrow\{0,1\}.
\end{equation}
The value $\mathcal{S}(s,a)=1$ means that action $a$ is safe in state $s$ under the encoded safety model.

A mission authorization policy is a state- and time-dependent set-valued function
\begin{equation}
\Pi_M:\mathcal{X}\times\mathbb{T}\rightarrow 2^{\mathcal{A}},
\label{eq:policy-set}
\end{equation}
where $\Pi_M(s,t)$ contains the actions authorized by mission $M$ in state $s$ at time $t$.

\begin{definition}[Authorized action]
An action $a$ is authorized under mission $M$ in state $s$ and at time $t$ when
\begin{equation}
a\in\Pi_M(s,t).
\end{equation}
\end{definition}

\begin{definition}[Authorized trace]
A trace $\tau$ is authorized under mission $M$ when every executed action satisfies
\begin{equation}
a_i\in\Pi_M(s_{i-1},t_i),\qquad 1\leq i\leq\ell.
\label{eq:trace-auth}
\end{equation}
\end{definition}

\begin{definition}[Safe trace]
A trace $\tau$ is safe under policy $\mathcal{S}$ when
\begin{equation}
\mathcal{S}(s_{i-1},a_i)=1,\qquad 1\leq i\leq\ell.
\label{eq:trace-safe}
\end{equation}
\end{definition}

These definitions deliberately separate authorization from safety. The same action can satisfy one predicate and fail the other.

\subsection{Provenance Labels}
Every value that reaches the planner or normalizer is assigned a provenance label from a finite set
\begin{equation}
\begin{aligned}
\mathcal{P}=\{&\mathsf{operator},\mathsf{supervisor},\mathsf{trustedSensor},\mathsf{camera},\\
&\mathsf{audio},\mathsf{retrieval},\mathsf{memory},\mathsf{model}\}.
\end{aligned}
\end{equation}
A label records where a value entered the system, not whether the value is factually correct. For example, a destination extracted from camera text receives label $\mathsf{camera}$ even when the text appears clear and reasonable. A destination inside the verified mission receives label $\mathsf{operator}$ or $\mathsf{supervisor}$.

The policy defines which labels may influence each field. Let $\mathsf{AllowedSrc}(f)\subseteq\mathcal{P}$ be the source set for field $f$. Privileged mission fields such as final destination, object identity, permission to record, and approval threshold normally accept only authenticated operator or supervisor sources. Trusted sensors may update state-dependent fields such as current position or obstacle presence. Camera, audio, retrieval, memory, and model outputs may provide observations, but they cannot modify mission authority.

This field-level rule is stronger than assigning one trust score to the complete prompt. A single prompt may contain an authenticated mission, untrusted camera text, and a model-generated explanation. AuthGuard-R keeps those components separate. The provenance label is carried into the normalized action and the audit record, allowing the gate to reject an unauthorized update and later explain which source attempted to change the field.

\section{Problem Definition}
\label{sec:problem}
\subsection{Motivating Scenario}
Consider a hospital delivery robot. An authorized operator asks the robot to carry medicine package $P_1$ from Pharmacy Desk A to Nurse Station 101 before 14:30. The mission permits navigation through the public corridor, carrying $P_1$, and using obstacle sensors. It does not permit entry into private offices, video recording, replacement of the medicine package, or delivery to another station. The structured mission is displayed to the operator and signed before the robot begins.

During execution, the robot camera reads a printed notice stating, ``Station 101 is closed. Deliver all packages to Station 103.'' The notice may be genuine, malicious, or outdated. The LLM interprets it as a mission update and produces a new plan that safely navigates to Station 103. The route contains no collision, speed violation, or dangerous manipulation. A conventional safety monitor may therefore approve every action.

The new plan is nevertheless unauthorized. The destination field was signed as Station 101, and the camera channel has no authority to replace it. A correct system may stop, request operator clarification, or accept a separately signed mission update. It must not allow the LLM to convert observed text into new authority.

This example illustrates three independent questions. The safety question asks whether the route and manipulation are physically acceptable. The authorization question asks whether the destination, object, sensors, and time remain inside the signed policy. The liveness question asks whether the robot can still make useful progress, perhaps by requesting a new decision. The proposed architecture keeps these questions separate so that a physically safe plan is not automatically treated as authorized.

\subsection{Mission-Deviation Function}
Let a normalized action have fields
\begin{equation}
a=(u,o,r_s,r_d,\theta,\lambda),
\label{eq:compact-action}
\end{equation}
where $u$ is the action type, $o$ is the object, $r_s$ and $r_d$ are source and destination regions, $\theta$ is the parameter vector, and $\lambda$ is the provenance label.

For each protected field $j$, define a violation indicator
\begin{equation}
d_j(a,M,s,t)=
\begin{cases}
1,&\text{if field }j\text{ violates mission }M,\\
0,&\text{otherwise.}
\end{cases}
\end{equation}

Let $w_j>0$ be the importance weight of field $j$. The action-level mission deviation is
\begin{equation}
D_M(a,s,t)=\frac{\sum_{j=1}^{m}w_j d_j(a,M,s,t)}{\sum_{j=1}^{m}w_j}.
\label{eq:action-deviation}
\end{equation}
Thus, $0\leq D_M(a,s,t)\leq 1$. The value is zero exactly when every protected field is compliant.

For a plan $P=(a_1,\ldots,a_\ell)$ with trace states $s_0,\ldots,s_{\ell-1}$, define
\begin{equation}
D_M(P)=\max_{1\leq i\leq\ell}D_M(a_i,s_{i-1},t_i).
\label{eq:plan-deviation}
\end{equation}
The maximum is used because a single unauthorized action is enough to violate the mission. An average score may also be reported during experiments.

\subsection{Safety-Compliant Mission Hijacking}
Let $\mathsf{X}(P,s_0)=1$ when $P$ is executable from $s_0$. Let
\begin{equation}
\mathsf{SA}(P)=\prod_{i=1}^{\ell}\mathcal{S}(s_{i-1},a_i)
\end{equation}
be the plan-level safety acceptance indicator. Let
\begin{equation}
\mathsf{AA}(P,M)=\prod_{i=1}^{\ell}\mathbf{1}[a_i\in\Pi_M(s_{i-1},t_i)]
\end{equation}
be the plan-level authorization indicator.

\begin{definition}[Safety-compliant mission hijacking]
\label{def:scmh}
A plan $P$ is a successful safety-compliant mission-hijacking plan when
\begin{equation}
\SCMH(P,M,s_0)=\mathsf{X}(P,s_0)\cdot\mathsf{SA}(P)\cdot(1-\mathsf{AA}(P,M))=1.
\label{eq:scmh}
\end{equation}
\end{definition}

Equation~\eqref{eq:scmh} captures the central attack. The plan is executable and accepted by the safety model, but at least one action is outside the authenticated mission.

\subsection{Attack Categories}
\label{sec:attack-categories}
Mission hijacking can change the final objective without introducing an obviously dangerous action. In a \emph{goal-substitution attack}, the adversary changes the requested outcome, such as replacing the authorized delivery destination with another room. The intermediate route may be safe and efficient, which makes the violation difficult for a safety-only mechanism to detect. Goal substitution is the clearest example of the safety-authorization gap because the robot may complete a coherent task while completing the wrong task.

An \emph{object-substitution attack} preserves the action type but changes the object. A manipulator may safely pick the blue container instead of the signed red container, or a service robot may deliver a similar package with the wrong identifier. An \emph{area-expansion attack} similarly preserves ordinary navigation while extending the route into a room, floor, or zone that is outside the signed scope. These attacks show why symbolic action names such as \texttt{pick} and \texttt{move} are not enough; the object and spatial parameters must also be bound.

A \emph{capability-escalation attack} activates a tool, sensor, actuator, network service, or robot API that the mission did not grant. For example, the LLM may start video recording ``for verification'' even though navigation alone is sufficient. A related \emph{parameter-escalation attack} uses an allowed capability with excessive force, speed, duration, data volume, or frequency. The action may still fall below a general safety threshold while exceeding the narrower mission limit.

A \emph{cross-robot attack} presents a valid mission token to a different robot, possibly one with stronger capabilities. An \emph{availability-manipulation attack} uses false warnings or repeated replanning to stop, delay, or loop the robot without creating an unsafe state. Finally, a \emph{provenance-confusion attack} causes environmental text, ordinary speech, retrieved content, or model output to be interpreted as an authenticated mission update. These categories are not mutually exclusive. A single attack may combine a forged emergency message, a destination change, and an unauthorized camera action. The evaluation records each violated field so that combined attacks can be analyzed rather than assigned to only one label.

\section{System and Threat Model}
\subsection{Entities}
The principal $U$ is the authorized operator who creates and signs a mission. Depending on the deployment, $U$ may be a human user, a hospital task-management service, an industrial scheduler, or another authenticated control system. Optional supervisors $V_1,\ldots,V_n$ approve operations that exceed an ordinary risk level. Their approval may be represented by individual signatures or by a threshold-signature protocol.

The robot $R$ contains the physical platform, a unique robot identity, the low-level controller $C$, and a trusted enforcement path. The LLM or VLM planner $L$ receives the mission together with observations and proposes a plan. The normalizer $N$ converts model output into the canonical action schema and preserves the provenance of every field. The authorization gate $G_A$ verifies the mission token and checks whether the normalized action belongs to the current mission policy. The safety gate $G_S$ separately evaluates physical and semantic safety. Only actions accepted by both gates are forwarded to $C$.

The environment $E$ contains the physical world, camera-visible text, audible speech, retrieved documents, shared memory, and sensor channels. The planner, retrieved context, environmental text, ordinary speech, and non-attested sensor descriptions are treated as untrusted. They may inform planning, but they cannot create authority.

The trusted computing base contains the mission compiler, key store, authorization gate, canonical encoder, nonce and mission registry, trusted clock, approved state estimator, safety gate, and emergency controller. This boundary is intentionally explicit. If the authorization gate or its trusted state is compromised, the formal guarantees no longer apply. Keeping the gate small and deterministic makes independent review, testing, and possible hardware isolation more practical.

\subsection{Mission Token}
The authorized user creates a structured mission
\begin{equation}
\begin{split}
M=(&\mathsf{mid},\mathsf{rid},\mathsf{issuer},\mathsf{task},\mathsf{actions},\\
&\mathsf{objects},\mathsf{regions},\mathsf{limits},\mathsf{guards},\\
&\mathsf{expiry},\mathsf{nonce},\mathsf{riskPolicy},\mathsf{version}).
\end{split}
\label{eq:mission}
\end{equation}
The fields define the mission identifier, robot identity, issuer, natural-language description, allowed actions, approved objects, allowed regions, quantitative limits, state guards, expiry time, nonce, approval policy, and schema version.

The mission compiler converts a clarified user request into $M$. The user signs a domain-separated encoding:
\begin{equation}
\sigma=\Sign_{sk_U}(H(\mathsf{tag}\|\mathsf{Encode}(M))).
\label{eq:signature}
\end{equation}
The constant $\mathsf{tag}$ prevents the same signature from being interpreted in another protocol. The mission token is
\begin{equation}
T=(M,\sigma,\mathsf{cert}_U).
\end{equation}
The certificate or key identifier allows the gate to locate the correct public key.

\subsection{Normalized Actions}
LLM planners express actions in different forms. One model may produce JSON tool calls, another may generate Python code, and a third may return free text that is converted into ROS 2 commands. AuthGuard-R therefore evaluates a common normalized action
\begin{equation}
a=(\mathsf{type},\mathsf{object},\mathsf{src},\mathsf{dst},\theta,\mathsf{tool},\mathsf{prov},\mathsf{meta}),
\label{eq:action}
\end{equation}
where $\theta$ contains numeric parameters such as speed, force, duration, and recording resolution. The metadata field carries the planner, mission version, timestamp, and evidence digest.

Normalization is part of the trusted enforcement path. A field that is missing, ambiguous, outside its type range, or inconsistent with the robot state causes rejection or a clarification request. The normalizer must not silently replace an unknown object with the nearest object or infer a destination from untrusted text. When an LLM is used to parse free-form output, a deterministic schema validator checks the result before authorization.

The common schema is also the main bridge among reused codebases. RoboPAIR, RoboGuard, BadRobot, and ROS 2 use different action representations. Adapters translate those formats into Equation~\eqref{eq:action}, while the core authorization policy remains independent of the selected planner and robot.

\subsection{Authorization Predicate}
For mission $M$, state $s$, and time $t$, let $\Pi_M(s,t)\subseteq\mathcal{A}$ be the set of actions authorized in that context. The authorization predicate is
\begin{equation}
\Auth(M,a,s,t)=1 \quad\Longleftrightarrow\quad a\in\Pi_M(s,t).
\label{eq:auth-predicate}
\end{equation}
Membership requires agreement with every mandatory field of the mission. The robot identifier must match, the mission must be active and unexpired, the action type and tool must be allowed, the object and locations must fall inside their signed sets, numeric parameters must satisfy their bounds, required state guards must hold, and privileged fields must have acceptable provenance.

The policy is conjunctive by default. If $C_1,\ldots,C_m$ denote the individual checks, then
\begin{equation}
\Auth(M,a,s,t)=\bigwedge_{j=1}^{m}C_j(M,a,s,t).
\end{equation}
A mission may explicitly encode alternatives, such as either of two approved corridors or one of several substitute containers. Alternatives are represented as signed disjunctions inside the mission, not inferred by the planner after signing.

The predicate returns more than a Boolean value in the implementation. It also returns a reason code and evidence, such as \texttt{OBJECT\_OUT\_OF\_SCOPE}, \texttt{EXPIRED}, or \texttt{PROVENANCE\_DENIED}. The formal model uses a bit for clarity, while the detailed result supports audit and recovery.

\subsection{Adversary Capabilities}
The adversary may control every input channel that is not explicitly authenticated. It may submit direct jailbreak prompts, place malicious text in the camera view, inject speech into the microphone channel, poison retrieved documents or robot memory, alter untrusted scene descriptions, and provide misleading tool results. It may also coordinate several channels, for example by placing a sign that refers the robot to a poisoned web page.

The model grants the adversary full control over the LLM planner. It can choose arbitrary proposed actions, fabricate explanations, omit inconvenient mission fields, and adapt its behavior after observing accept or reject decisions. This strong assumption avoids relying on the internal alignment of a particular model. The planner may know the mission schema, action vocabulary, policy structure, and order of the two gates.

The adversary can replay old tokens and previously accepted actions, present a valid token to the wrong robot, alter unsigned fields, and compromise fewer than the number of approvers required by the threshold policy. It can also attempt availability attacks by generating repeated invalid actions or false emergency claims.

The adversary does not obtain the operator's signing key, does not compromise the required threshold of approval keys, and does not modify the trusted authorization gate, trusted clock, persistent registry, approved state estimator, or low-level controller. It cannot directly inject commands after the enforcement point. Attacks below the trusted state estimator, physical key extraction, malicious authorized operators, and compromise of the trusted gate are outside the main proof model. These exclusions do not imply that such attacks are unimportant; they define the boundary of the guarantees proved in this paper.

\subsection{Security Goals}
The primary goal is \emph{authorization soundness}: whenever the gate accepts an action, that action must belong to the policy derived from a valid signed mission in the current state and time. This goal prevents the LLM from turning an observation or model suggestion into new authority. A complementary utility goal is \emph{authorization completeness}: a well-formed action that satisfies every signed condition should be accepted rather than blocked by the authorization layer.

The system also requires \emph{mission non-escalation}. The set of actions available to the untrusted planner must never be larger than the set encoded by the mission. \emph{Replay resistance} prevents a completed, expired, or revoked mission from being executed again. \emph{Robot binding} prevents a token issued for one robot from being transferred to another platform. \emph{Provenance separation} prevents unauthorized channels from modifying privileged fields.

For high-risk operations, \emph{approval integrity} requires the declared number of independent approvals. For accountability, the audit mechanism should make later modification, insertion, deletion, or reordering of committed records detectable. Finally, the dual-gate system should provide \emph{trace composition}: every executed action should satisfy both the mission policy and the safety policy at the state in which it is executed.

These goals do not claim that the mission is morally correct or physically complete. They guarantee faithful enforcement of the signed policy under the stated trust assumptions. Safety of the physical action remains the responsibility of the independent safety gate and lower-level control mechanisms.

\section{MissionPAIR Attack Framework}
\label{sec:attack-framework}
\subsection{Constrained Attack Objective}
MissionPAIR searches for an adversarial input $q$ that causes the target planner $L$ to produce a safety-compliant but unauthorized plan. Let
\begin{equation}
P_q=L(I,q,C_q),
\end{equation}
where $I$ is the original user instruction and $C_q$ is the remaining context, which may include images, audio, memory, and tool outputs.

The ideal attack is the constrained optimization problem
\begin{equation}
\begin{aligned}
\max_q\quad & D_M(P_q)\\
\text{subject to}\quad & \mathsf{X}(P_q,s_0)=1,\\
& \mathsf{SA}(P_q)=1,\\
& D_M(P_q)>0.
\end{aligned}
\label{eq:constrained-attack}
\end{equation}
This formulation directly expresses the research goal. The attacker wants a mission violation, but the plan must remain executable and acceptable to the safety baseline.

In practice, the search loop may use the scalar score
\begin{equation}
\begin{split}
J(q)=&\ \lambda_1D_M(P_q)+\lambda_2\mathsf{X}(P_q,s_0)\\
&+\lambda_3\mathsf{SA}(P_q)-\lambda_4\mathsf{ObviousHarm}(P_q)\\
&-\lambda_5\mathsf{InvalidSyntax}(P_q),
\end{split}
\label{eq:score}
\end{equation}
where every $\lambda_i$ is positive. The final success label must be produced by deterministic execution, safety, and mission checkers rather than only by an LLM judge.

\subsection{Attack Procedure}
\begin{algorithm}[t]
\caption{MissionPAIR: Safety-Compliant Mission Hijacking}
\label{alg:missionpair}
\begin{algorithmic}[1]
\Require Mission $M$, target planner $L$, safety gate $G_S$, transition model $\delta$, maximum rounds $N$
\State Initialize adversarial context $q_0$
\For{$i=0$ to $N-1$}
    \State $P_i\gets L(M.\mathsf{task},q_i)$
    \State $\widehat{P}_i\gets\mathsf{Normalize}(P_i)$
    \State $x_i\gets\mathsf{Executable}(\widehat{P}_i,\delta)$
    \State $s_i\gets G_S(\widehat{P}_i)$
    \State $d_i\gets D_M(\widehat{P}_i)$
    \If{$x_i=1$ and $s_i=\mathsf{accept}$ and $d_i>0$}
        \State \Return $(q_i,\widehat{P}_i)$
    \EndIf
    \State $q_{i+1}\gets\mathsf{AttackerUpdate}(q_i,\widehat{P}_i,x_i,s_i,d_i)$
\EndFor
\State \Return failure
\end{algorithmic}
\end{algorithm}

\begin{proposition}[Meaning of MissionPAIR success]
If Algorithm~\ref{alg:missionpair} returns $(q_i,\widehat{P}_i)$ and the deterministic checkers are correct, then $\SCMH(\widehat{P}_i,M,s_0)=1$.
\end{proposition}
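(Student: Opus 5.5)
The plan is to unfold the return condition of Algorithm~\ref{alg:missionpair} and match each conjunct to one factor of Equation~\eqref{eq:scmh}. The algorithm returns $(q_i,\widehat{P}_i)$ only when the branch condition holds, so at that round we have $x_i=1$, $s_i=\mathsf{accept}$, and $d_i>0$. First we make ``the deterministic checkers are correct'' precise as three assumptions, all evaluated on the normalized plan $\widehat{P}_i$ started from $s_0$:
\begin{itemize}
\item[(i)] $\mathsf{Executable}(\widehat{P}_i,\delta)=\mathsf{X}(\widehat{P}_i,s_0)$;
\item[(ii)] $G_S(\widehat{P}_i)=\mathsf{accept}$ exactly when $\mathsf{SA}(\widehat{P}_i)=1$;
\item[(iii)] the computed $d_i$ equals $D_M(\widehat{P}_i)$ as defined in Equation~\eqref{eq:plan-deviation}, with the violation indicators $d_j$ faithful to $\Pi_M$.
\end{itemize}
Assumption (iii) means that every protected field of $a$ is compliant at $(s,t)$ if and only if $a\in\Pi_M(s,t)$, which is the conjunctive form of $\Auth$.

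Steps one and two are immediate. By (i), $x_i=1$ gives $\mathsf{X}(\widehat{P}_i,s_0)=1$. This also makes the trace states $s_1,\ldots,s_\ell$ well defined, so $\mathsf{SA}$ and $\mathsf{AA}$ are evaluated on an actual trace. By (ii), $s_i=\mathsf{accept}$ gives $\mathsf{SA}(\widehat{P}_i)=1$.

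Step three converts $d_i>0$ into $\mathsf{AA}(\widehat{P}_i,M)=0$. Since $D_M$ is a maximum over actions, some index $k$ has $D_M(a_k,s_{k-1},t_k)>0$. Because all weights $w_j$ are strictly positive, Equation~\eqref{eq:action-deviation} is positive only if some indicator satisfies $d_j(a_k,M,s_{k-1},t_k)=1$. By (iii), a violated protected field means $a_k\notin\Pi_M(s_{k-1},t_k)$. Then the factor $\mathbf{1}[a_k\in\Pi_M(s_{k-1},t_k)]$ is zero, so the product gives $\mathsf{AA}=0$. Multiplying the three factors yields $\SCMH(\widehat{P}_i,M,s_0)=1\cdot 1\cdot(1-0)=1$.

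The main obstacle is step three: it needs the deviation indicators to agree exactly with membership in $\Pi_M$. Otherwise, a field could be flagged as violating even though $\Pi_M$ admits it, for instance through a signed disjunction. I would handle this by stating, as part of checker correctness, that $D_M(a,s,t)=0$ if and only if $\Auth(M,a,s,t)=1$; only the ``if authorized then all fields compliant'' direction is actually used. A secondary subtlety is that the state and time arguments $s_{k-1},t_k$ used by the checker must match those of the trace induced from $s_0$. Assumption (i) secures this, and I would note it explicitly.
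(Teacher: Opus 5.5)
Your proposal is correct and follows essentially the same route as the paper's proof: read $x_i=1$, $s_i=\mathsf{accept}$, and $d_i>0$ off the return condition, turn each into the corresponding factor $\mathsf{X}=1$, $\mathsf{SA}=1$, $\mathsf{AA}=0$, and substitute into Equation~\eqref{eq:scmh}. Your explicit assumptions (i)--(iii) make precise what the paper leaves inside ``the deterministic checkers are correct''; the most important is that a flagged protected field forces $a_k\notin\Pi_M(s_{k-1},t_k)$.
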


\begin{proof}
The algorithm returns only when three conditions hold. First, $x_i=1$, so the plan is executable. Second, the safety gate accepts, so $\mathsf{SA}(\widehat{P}_i)=1$ under the chosen safety model. Third, $d_i=D_M(\widehat{P}_i)>0$, so at least one protected mission field is violated and therefore $\mathsf{AA}(\widehat{P}_i,M)=0$. Substituting these values into Equation~\eqref{eq:scmh} gives $\SCMH=1$.
\end{proof}

\subsection{Adaptive Attacker Models}
MissionPAIR is evaluated under black-box, grey-box, and white-box access. In the black-box setting, the attacker observes the generated plan and the final execution or rejection result. It does not know which policy field failed. This setting models an external attacker who can repeatedly interact with a deployed robot but has little internal information.

In the grey-box setting, the attacker knows the mission schema, the action vocabulary, and the broad type of defense. It may know that object and destination binding are used, but not the exact signed values or all state guards. This model is realistic for systems whose architecture is documented but whose mission tokens and runtime state remain private.

In the white-box setting, the attacker knows the complete policy, the gate order, the reason codes, and the attack score. It may query the system adaptively and optimize directly against the authorization and safety decisions. Cryptographic keys, trusted state, and unannounced fresh nonces remain secret. White-box evaluation is important because the security of AuthGuard-R should not depend on hiding the verification algorithm.

The three settings measure different properties. Black-box experiments test exposure through ordinary interaction. Grey-box experiments test robustness when system design is public. White-box experiments test whether the enforcement mechanism remains sound under the strongest adaptive planning attack permitted by the model. A large difference among these settings may reveal information leakage through reason codes or scoring interfaces.

\subsection{Attack Templates}
MissionPAIR begins with semantically meaningful attack families and then optimizes their wording. A destination-change template states that the authorized room is unavailable and asks the robot to use another room. An object-relabeling template claims that the visual label or color in the signed mission is outdated. A privacy-escalation template requests a photograph or video ``for verification.'' A false-emergency template instructs the robot to stop, wait, or return to base without authenticated authorization. A route-expansion template claims that an out-of-scope path is safer or shorter.

Each semantic template is instantiated through several delivery channels. The same instruction may be sent directly as user text, printed on a visible sign, spoken near the robot, inserted into a retrieved document, stored in memory, or represented as altered scene context. This design allows the experiment to separate the content of the attack from the channel through which it arrives.

The benchmark also includes legitimate mission updates. A real operator may discover that Station 101 is closed and issue a newly signed mission for Station 103. A defense that rejects every change would block the attack but also block legitimate operations. The paired benchmark therefore contains an unauthenticated change and an otherwise identical authenticated update. AuthGuard-R should reject the first and accept the second when all other conditions hold.

The attack language should remain simple enough to inspect. Highly obfuscated strings may be included as an additional stress test, but the main result should demonstrate a structural authorization failure rather than only a model-specific tokenization trick.

\section{AuthGuard-R Design}
\subsection{Design Principle}
AuthGuard-R follows a zero-trust rule for the planner: the LLM may propose an action, but only an authenticated principal may define or change the authority under which that action is executed. The model is therefore outside the authorization boundary even when it is locally hosted, fine-tuned, or protected by a system prompt. This decision is based on the observation that the planner consumes untrusted context and may be manipulated through channels that are difficult to enumerate in advance.

The architecture separates authenticated mission instructions, trusted state, untrusted observations, proposed actions, authorization decisions, and physical safety decisions. These objects may appear together in the LLM context for planning, but they remain distinct at the enforcement layer. A visible sign can describe a corridor, and a camera can report that a door is closed, but neither can replace the signed final destination.

The gate applies least authority. A mission grants only the actions, objects, regions, tools, parameters, and time needed for one task. Authority is also stateful: an action allowed before pickup may be forbidden after delivery, and a one-time capability becomes invalid after use. When the mission does not contain enough information, the correct response is clarification or a signed update rather than an LLM-generated assumption.

The safety gate remains independent. Combining authorization and safety into one LLM prompt would make the decision difficult to audit and would allow the same compromised model to interpret both the mission and the rule that limits it. The dual-gate design provides separate reason codes and separate evidence for the two properties.

\subsection{Architecture}
Figure~\ref{fig:architecture} shows the dual-gate design.

\begin{figure}[t]
\centering
\begin{tikzpicture}[
node distance=5.2mm,
box/.style={draw, rounded corners, align=center, minimum width=3.12cm, minimum height=0.69cm, font=\small},
smallbox/.style={draw, rounded corners, align=center, minimum width=1.40cm, minimum height=0.65cm, font=\scriptsize},
arrow/.style={-{Latex[length=2mm]}, thick}
]
\node[box] (user) {Authenticated User\\Signed Mission Token};
\node[box, below=of user] (llm) {Untrusted LLM/VLM Planner};
\node[smallbox, left=5mm of llm] (sensor) {Camera\\Audio\\Sensors};
\node[box, below=of llm] (norm) {Action Normalizer\\and Provenance Labelling};
\node[box, below=of norm] (auth) {AuthGuard-R\\Authorization Gate};
\node[box, below=of auth] (safe) {Independent Safety Gate\\(e.g., RoboGuard)};
\node[box, below=of safe] (ros) {ROS 2 Controller\\and Physical Robot};
\draw[arrow] (user) -- (llm);
\draw[arrow] (sensor) -- (llm);
\draw[arrow] (llm) -- (norm);
\draw[arrow] (norm) -- (auth);
\draw[arrow] (auth) -- node[right,font=\scriptsize]{authorized} (safe);
\draw[arrow] (safe) -- node[right,font=\scriptsize]{safe} (ros);
\draw[arrow, dashed] (auth.east) -- ++(0.45,0) node[right,font=\scriptsize,align=left]{reject /\\request approval};
\draw[arrow, dashed] (safe.east) -- ++(0.45,0) node[right,font=\scriptsize,align=left]{reject /\\safe repair};
\end{tikzpicture}
\caption{Dual-gate architecture. AuthGuard-R checks authenticated mission scope. The independent gate checks physical safety.}
\label{fig:architecture}
\end{figure}

The order in Fig.~\ref{fig:architecture} places authorization before physical safety. This avoids spending safety-analysis resources on actions that are already unauthorized. The reverse order is also possible, but execution must still require both decisions.

\subsection{Mission Compilation and Clarification}
The mission compiler converts the operator's natural-language request into a structured mission. It extracts the robot identity, task identifier, allowed action types, object set, source and destination regions, numeric limits, time window, state guards, provenance policy, risk class, and approval rule. The compiler may use an LLM to propose these fields, but the proposed structure is not authoritative until it passes deterministic schema checks and is confirmed by the operator.

Clarification is required when a privileged field is absent or ambiguous. For example, ``bring the package to the office'' is not signed until the package and office are uniquely identified. Similarly, a request to ``inspect the room'' should clarify whether recording, image storage, or network transmission is allowed. The system should present the structured interpretation in a compact form and highlight permissions that create privacy or physical risk.

After confirmation, the compiler canonicalizes the mission, assigns a version and fresh nonce, and signs the domain-separated digest. The original sentence, the structured mission, and the confirmation record are stored together. The natural-language request supports later explanation, while the structured mission is the only object used by the authorization predicate.

Mission updates follow the same process. A changed destination or expanded capability creates a new version $M^{(v+1)}$ with a fresh signature. The previous version is marked superseded or revoked, and the audit log records the relationship between versions.

\subsection{Provenance Policy}
Table~\ref{tab:provenance} gives a simple provenance policy.

\begin{table}[t]
\centering
\caption{Example provenance policy.}
\label{tab:provenance}
\resizebox{\columnwidth}{!}{%
\begin{tabular}{p{2.25cm}p{5.15cm}}
\toprule
Source & Permitted effect \\
\midrule
Authenticated operator & May create or update ordinary mission fields \\
Authorized supervisor & May approve exceptional or high-risk actions \\
Trusted state estimator & May update attested state variables \\
Camera/OCR & May describe observations; cannot change mission fields \\
Environmental speech & Untrusted observation by default \\
Retrieved document & Informational unless separately authenticated \\
LLM output & May propose actions; cannot grant authority \\
\bottomrule
\end{tabular}}
\end{table}

The provenance label is carried into the normalized action and audit record. When a proposed destination comes from camera text rather than the signed mission, the destination check fails even if the LLM describes it as an update.

\subsection{Stateful Action Verification}
\begin{algorithm}[t]
\caption{AuthGuard-R Action Verification}
\label{alg:authguard}
\begin{algorithmic}[1]
\Require Token $T=(M,\sigma,\mathsf{cert}_U)$, action $a$, state $s$, time $t$, registry $\mathcal{R}$
\If{$\neg\mathsf{CertificateValid}(\mathsf{cert}_U)$}
    \State \Return reject
\EndIf
\If{$\neg\Verify_{pk_U}(H(\mathsf{tag}\|\mathsf{Encode}(M)),\sigma)$}
    \State \Return reject
\EndIf
\If{$\neg\mathsf{SchemaValid}(M)$ or $t>M.\mathsf{expiry}$}
    \State \Return reject
\EndIf
\If{$M.\mathsf{nonce}$ is revoked or mission state in $\mathcal{R}$ is invalid}
    \State \Return reject
\EndIf
\If{$M.\mathsf{rid}\neq R.\mathsf{id}$}
    \State \Return reject
\EndIf
\If{$a\notin\Pi_M(s,t)$}
    \State \Return reject
\EndIf
\If{$\neg\mathsf{ApprovalValid}(a,s,M)$}
    \State \Return request additional approval
\EndIf
\State Update mission state and append audit record
\State \Return accept
\end{algorithmic}
\end{algorithm}

The registry $\mathcal{R}$ stores mission status such as \texttt{issued}, \texttt{active}, \texttt{completed}, \texttt{revoked}, or \texttt{expired}. It may also store a step counter when the mission allows a fixed number of actions.

\subsection{Risk-Adaptive Approval}
Not every robot action needs the same approval process. Let $\rho(a,s)\geq 0$ be a deployment-specific risk score based on action type, region, object, force, privacy impact, and current state. The required approval count is
\begin{equation}
\mathsf{Req}(a,s)=
\begin{cases}
1,&\rho(a,s)<\tau_1,\\
2,&\tau_1\leq\rho(a,s)<\tau_2,\\
t,&\rho(a,s)\geq\tau_2.
\end{cases}
\label{eq:threshold}
\end{equation}
Ordinary navigation inside a public corridor may require one operator. Entry into a restricted area may require an operator and a supervisor. Disabling a safety interlock or handling hazardous material may require a $t$-of-$n$ threshold approval.

The risk policy and thresholds are included in the signed mission or in a separately authenticated organizational policy. The LLM cannot lower the risk class by describing the action differently. The normalizer maps each proposed action to the canonical type used by the risk function.

Approval must also be bound to the exact mission version and action scope. A supervisor's approval for entering one room should not authorize entry into every restricted room. Short-lived action-specific approvals reduce this risk. The implementation records which authorities approved the action and whether the approval was individual or threshold-based.

\subsection{Hash-Chained Audit Record}
For each decision $d_i\in\{\mathsf{accept},\mathsf{reject},\mathsf{approval}\}$, the gate computes
\begin{equation}
L_i=H(L_{i-1}\|\mathsf{mid}\|i\|a_i\|s_i\|t_i\|d_i\|e_i),
\label{eq:log}
\end{equation}
where $e_i$ is an evidence digest containing the mission version, relevant provenance labels, gate reason code, and safety decision. The chain binds the order of decisions: changing an earlier record changes every later digest.

Hash chaining alone does not prevent an attacker from deleting the complete log or replacing it with a new chain. AuthGuard-R therefore periodically signs the terminal digest or anchors it in a separate trusted service. The anchor interval determines how much recent history could be lost without immediate detection. A high-risk deployment may anchor after every privileged action, while a low-risk simulator may anchor at the end of each mission.

The audit log is designed for integrity and accountability, not confidentiality. Mission records may contain sensitive locations, object identifiers, or sensor evidence. The implementation should encrypt protected log fields, apply role-based access, and retain only the evidence required for investigation. The evaluation reports both logging latency and storage overhead.

\subsection{Authorization Completeness}
\begin{theorem}[Completeness of the authorization gate]
Assume that a token $T$ has a valid certificate and signature, the mission schema is valid and fresh, the token is active in the registry, the robot identity matches, $a\in\Pi_M(s,t)$, and every required approval is valid. Then Algorithm~\ref{alg:authguard} accepts $a$.
\end{theorem}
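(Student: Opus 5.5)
The plan is a direct trace through Algorithm~\ref{alg:authguard}. The algorithm is a straight-line sequence of guarded early returns followed by a final \textbf{accept}. So it suffices to show that, under the hypotheses, no guard fires and the final update step completes. I will walk through the guards in the order they appear and discharge each with the matching hypothesis.

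First, $\mathsf{CertificateValid}(\mathsf{cert}_U)$ holds by assumption, so the first branch is skipped. Second, the hypothesis of a valid signature is read as $\Verify_{pk_U}(H(\mathsf{tag}\|\mathsf{Encode}(M)),\sigma)=1$. Here $pk_U$ is the key located through $\mathsf{cert}_U$, and $\sigma$ is computed as in Equation~\eqref{eq:signature}. If the hypothesis is instead stated as ``$\sigma$ was honestly produced,'' I would invoke correctness of $\Sigma$, which makes the conclusion hold except with negligible probability. I would state this caveat explicitly, or fix the deterministic-verification reading, which is what the theorem's wording suggests.

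Third, ``schema valid and fresh'' gives $\mathsf{SchemaValid}(M)$ and $t\le M.\mathsf{expiry}$, so the expiry branch is skipped. Fourth, ``active in the registry'' means the nonce is not revoked and the mission state in $\mathcal{R}$ is valid, so the registry branch is skipped. Fifth, robot-identity matching gives $M.\mathsf{rid}=R.\mathsf{id}$. Sixth, $a\in\Pi_M(s,t)$ skips the policy branch; equivalently, $\Auth(M,a,s,t)=1$ by Equation~\eqref{eq:auth-predicate}. Seventh, valid required approvals give $\mathsf{ApprovalValid}(a,s,M)=1$, so the algorithm does not return ``request additional approval.''

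Control then reaches the state-update and audit-append line. I would note that this line is an unconditional, total operation that cannot return reject: computing $L_i$ via Equation~\eqref{eq:log} always succeeds. Hence the algorithm returns \textbf{accept}.

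The main obstacle is not logical depth but definitional alignment. Each informal hypothesis must map exactly onto the corresponding predicate in the pseudocode. ``Fresh'' must mean $t\le M.\mathsf{expiry}$, and ``active'' must cover both the nonce-revocation test and the registry-state test. The signature step must also be stated as deterministic verification rather than as a probabilistic correctness guarantee. I would make these identifications explicit at the start of the proof, so that the remainder is a mechanical case-by-case check.
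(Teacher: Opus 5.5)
Your proposal is correct and follows the same route as the paper: both argue that, under the hypotheses, every early-return branch of Algorithm~\ref{alg:authguard} is skipped, so the deterministic control flow reaches the update step and returns \textsf{accept}. Your guard-by-guard matching of hypotheses to predicates makes explicit what the paper's one-line proof leaves implicit. This includes noting that the approval branch returns ``request additional approval'' rather than reject, and reading signature validity as deterministic verification.
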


\begin{proof}
Under the assumptions, every rejecting condition in Lines 1--17 of Algorithm~\ref{alg:authguard} evaluates to false. The algorithm reaches the final update and returns \textsf{accept}. The result follows from the deterministic control flow.
\end{proof}

Completeness is important for utility. A defense that rejects every action is secure against mission hijacking but is not useful.

\section{Formal Security Model}
\label{sec:security-games}
\subsection{Authorization-Bypass Game}
The authorization-bypass experiment $\mathsf{Game}^{\mathsf{auth}}_{\mathcal{A}}(\kappa)$ formalizes the strongest direct failure of the authorization gate. The challenger first generates $(sk_U,pk_U)\leftarrow\KeyGen(1^\kappa)$ and gives $pk_U$ to the adversary. The adversary receives oracle access to mission signing and may adaptively request tokens for structured missions $M_1,\ldots,M_q$. This access models a powerful attacker that has observed many legitimate missions and can choose their content, but does not know the signing key.

After interacting with the oracle, the adversary controls the planner and all untrusted input channels. It outputs a token, normalized action, state, and time tuple $(T^*,a^*,s^*,t^*)$. The adversary wins when the authorization gate accepts and one of two events occurs. In the first event, the mission $M^*$ contained in the token was never submitted to the signing oracle. This event captures signature forgery, message substitution, or hash collision. In the second event, $M^*$ was validly signed but $a^*\notin\Pi_{M^*}(s^*,t^*)$. This event captures an implementation or policy-checking bypass under an authentic token.

The advantage is
\begin{equation}
\Adv^{\mathsf{auth}}_{\mathcal{A}}(\kappa)=
\Pr[\mathsf{Game}^{\mathsf{auth}}_{\mathcal{A}}(\kappa)=1].
\end{equation}
The two winning conditions deliberately separate cryptographic failure from enforcement failure. The reduction proof handles the first condition through signature unforgeability and collision resistance. The second condition is excluded only under the explicit assumption that the deterministic policy checker correctly decides membership in $\Pi_M(s,t)$.

\subsection{Replay Game}
The replay experiment models reuse of an otherwise valid mission. The challenger issues a token $T$ containing a fresh mission identifier and nonce, activates it, and allows the adversary to complete or consume the mission according to its state machine. The challenger then records the terminal state as \texttt{completed}, \texttt{revoked}, or \texttt{expired}. The adversary may retain the token, all previous actions, and all public execution data.

The adversary wins if it later causes the same mission instance to be accepted again. Its advantage is
\begin{equation}
\Adv^{\mathsf{replay}}_{\mathcal{A}}(\kappa)=
\Pr[\mathsf{ReplayGame}_{\mathcal{A}}(\kappa)=1].
\end{equation}
This definition covers replay of the whole plan and replay of a one-time privileged action. The implementation can represent one-time actions with consumed step identifiers inside the persistent registry.

The game assumes that the registry does not silently roll back. If persistent state can be restored to an earlier snapshot, a valid old token may become active again without breaking the signature scheme. Stable storage, monotonic counters, or a trusted remote checkpoint are therefore part of the replay-resistance assumption.

\subsection{Cross-Robot Game}
The cross-robot experiment tests whether mission authority is transferable between platforms. The challenger registers two distinct robots $R_1$ and $R_2$ with identifiers $\mathsf{rid}_1\neq\mathsf{rid}_2$. It issues a valid token whose mission field contains $\mathsf{rid}_1$. The adversary may observe valid executions on $R_1$ and may modify every unsigned transport field.

The adversary wins if $R_2$ accepts an action under the token for $R_1$. We define
\begin{equation}
\Adv^{\mathsf{robot}}_{\mathcal{A}}(\kappa)=
\Pr[\mathsf{RobotGame}_{\mathcal{A}}(\kappa)=1].
\end{equation}
Robot binding is especially important when platforms expose similar ROS 2 interfaces but have different payloads, tools, or safety envelopes. A token for a small service robot must not authorize a stronger manipulator merely because both understand the same action schema.

The identifier is included inside the signed mission and compared against a trusted local identity. A network-provided robot name is not sufficient because the attacker may rewrite it in transit. Deployment may bind the identifier to a device certificate, secure element, or SROS2 identity.

\subsection{Provenance-Confusion Game}
The provenance-confusion experiment fixes a signed mission $M$ and selects a privileged field $f$, such as destination, object identity, recording permission, or approval threshold. The adversary controls only sources whose labels are outside $\mathsf{AllowedSrc}(f)$. It may provide unlimited camera text, audio, retrieved documents, memory entries, and model-generated explanations.

The adversary wins if the effective policy used by the authorization gate changes field $f$ without a newly authenticated mission update. Its advantage is
\begin{equation}
\Adv^{\mathsf{prov}}_{\mathcal{A}}(\kappa)=
\Pr[\mathsf{ProvGame}_{\mathcal{A},f}(\kappa)=1].
\end{equation}
The word ``effective'' is important. A planner may mention a new destination in its explanation, but the game is not won unless that value influences the policy used for acceptance.

A correct implementation keeps mission fields immutable after signature verification and treats untrusted values as observations only. The game therefore evaluates both data-flow separation and policy enforcement. It also motivates taint-style tests in which a unique marker is inserted into an untrusted source and traced through normalization and logging.

\subsection{Audit-Tampering Game}
The audit-tampering experiment begins with a valid sequence of records and a terminal digest that has been signed or anchored outside the mutable log store. The adversary receives the complete log and may change a decision, alter evidence, delete a record, insert a new record, reorder records, or truncate the sequence.

The adversary wins when the modified log verifies against the same external anchor. We write
\begin{equation}
\Adv^{\mathsf{log}}_{\mathcal{A}}(\kappa)=
\Pr[\mathsf{LogGame}_{\mathcal{A}}(\kappa)=1].
\end{equation}
A successful modification either preserves a hash value across different encoded records or creates a different chain with the same anchored terminal digest. Under collision resistance and unambiguous encoding, both events should have negligible probability.

The game does not guarantee that every event was logged at the time it occurred. It assumes that the trusted gate creates records correctly. It also does not provide secrecy. These limitations are stated because audit integrity is often described too broadly in security architectures.

\section{Security Proofs}
\label{sec:proofs}
This section proves security in the idealized model. The trusted gate is assumed to implement the stated predicates correctly. Implementation bugs, key leakage, and compromised trusted state are outside these proofs.

\subsection{Authorization Soundness}
\begin{theorem}[Authorization soundness]
Assume that $\Sigma$ is EUF-CMA secure, $H$ is collision resistant, and the policy checker correctly decides membership in $\Pi_M(s,t)$. Then, for every probabilistic polynomial-time adversary $\mathcal{A}$, there exist adversaries $\mathcal{B}_1$ and $\mathcal{B}_2$ such that
\begin{equation}
\Adv^{\mathsf{auth}}_{\mathcal{A}}(\kappa)
\leq
\Adv^{\mathsf{euf}}_{\Sigma,\mathcal{B}_1}(\kappa)
+
\Adv^{\mathsf{coll}}_{H,\mathcal{B}_2}(\kappa).
\label{eq:auth-bound}
\end{equation}
Hence, the authorization-bypass advantage is negligible.
\end{theorem}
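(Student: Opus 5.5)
The plan is to split the winning event of $\mathsf{Game}^{\mathsf{auth}}_{\mathcal{A}}(\kappa)$ according to its two disjuncts and bound each one separately.

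Let $W$ denote the event that the gate accepts $(T^*,a^*,s^*,t^*)$ and $\mathcal{A}$ wins. Write $W=W_1\cup W_2$. Here $W_1$ is the event that $M^*$ was never queried to the signing oracle. $W_2$ is the event that $M^*$ was queried but $a^*\notin\Pi_{M^*}(s^*,t^*)$.

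The event $W_2$ is eliminated first, because it does not depend on cryptography. Acceptance by Algorithm~\ref{alg:authguard} requires passing the check $a^*\in\Pi_{M^*}(s^*,t^*)$. By the assumption that the policy checker correctly decides membership, acceptance therefore implies membership, so $\Pr[W_2]=0$. This leaves $\Adv^{\mathsf{auth}}_{\mathcal{A}}(\kappa)\le\Pr[W_1]$.

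Next, $W_1$ is refined by the digest. Write $h_i=H(\mathsf{tag}\|\mathsf{Encode}(M_i))$ for the queried missions and $h^*=H(\mathsf{tag}\|\mathsf{Encode}(M^*))$. Acceptance implies $\Verify_{pk_U}(h^*,\sigma^*)=1$. This gives a further case split:
\begin{itemize}
\item $W_{1a}$: $h^*\notin\{h_1,\ldots,h_q\}$.
\item $W_{1b}$: $h^*=h_i$ for some $i$.
\end{itemize}

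For $W_{1a}$, build $\mathcal{B}_1$ against EUF-CMA.
\begin{enumerate}
\item $\mathcal{B}_1$ receives $pk_U$ and runs $\mathcal{A}$.
\item It answers each mission-signing query $M_i$ by computing $h_i$ itself and forwarding $h_i$ to its own signing oracle.
\item At the end it outputs $(h^*,\sigma^*)$.
\end{enumerate}
The simulation is perfect, since $\mathcal{B}_1$ holds everything except $sk_U$. That includes the registry, clock, state and policy checker, which are public, deterministic and trusted components. Under $W_{1a}$ the message $h^*$ is fresh and verifies, so $\Pr[W_{1a}]\le\Adv^{\mathsf{euf}}_{\Sigma,\mathcal{B}_1}(\kappa)$.

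For $W_{1b}$, build $\mathcal{B}_2$ against collision resistance.
\begin{enumerate}
\item $\mathcal{B}_2$ generates the key pair itself and runs $\mathcal{A}$, answering signing queries honestly.
\item It finds an index $i$ with $h_i=h^*$.
\item It outputs the pair $(\mathsf{tag}\|\mathsf{Encode}(M^*),\ \mathsf{tag}\|\mathsf{Encode}(M_i))$.
\end{enumerate}
Since $M^*\neq M_i$ (it was never queried), canonical encoding must be injective on schema-valid missions for the pair to be a genuine collision. The check $\mathsf{SchemaValid}(M^*)$ in Algorithm~\ref{alg:authguard} guarantees that $M^*$ is a well-formed versioned mission. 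Given injectivity, the two inputs are distinct with equal hash, so $\Pr[W_{1b}]\le\Adv^{\mathsf{coll}}_{H,\mathcal{B}_2}(\kappa)$.

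Summing the two cases yields Equation~\eqref{eq:auth-bound}. Both reductions run in polynomial time, so both advantages, and hence their sum, are negligible.

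The main obstacle is the collision step, specifically the requirement that $\mathsf{Encode}$ be injective. The paper states canonical, unambiguous encoding only informally, so I would make it an explicit hypothesis: $\mathsf{Encode}$ is injective on schema-valid missions, with a fixed prefix-free $\mathsf{tag}$. Without this, two distinct missions could share an encoding and defeat the reduction without any hash collision. A secondary subtlety is that "never submitted" must be read as structured-mission inequality, which is exactly why injectivity is needed. A further point is that $\mathcal{B}_1$ and $\mathcal{B}_2$ each run $\mathcal{A}$ separately, so there is no guessing loss and the bound is additive with no factor $q$.
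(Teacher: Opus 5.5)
Your proposal is correct and follows the paper's proof: the same split of the winning event, with the policy-bypass case ruled out by correctness of the membership test, and the never-queried case divided into a fresh-digest EUF-CMA forgery ($\mathcal{B}_1$) versus a digest match with some queried $M_i$ giving a hash collision ($\mathcal{B}_2$), combined by a union bound. Your explicit injectivity hypothesis on $\mathsf{Encode}$ is what the paper relies on implicitly through its requirement that distinct policies never share a canonical encoding. It must also cover the oracle-queried $M_i$, or the oracle must sign only schema-valid missions, because the gate's $\mathsf{SchemaValid}$ check constrains only $M^*$.
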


\begin{proof}
Suppose the gate accepts $(T^*,a^*,s^*,t^*)$. The gate has verified a signature on $H(\mathsf{tag}\|\mathsf{Encode}(M^*))$ and has checked $a^*\in\Pi_{M^*}(s^*,t^*)$.

Consider the two winning conditions. If $M^*$ was never queried to the signing oracle, then a valid signature on its digest is a new forgery, unless the adversary reused a signature from a different queried mission $M_j$ with the same digest. In the first case, we construct $\mathcal{B}_1$ that outputs the signature as an EUF-CMA forgery. In the second case, $M_j\neq M^*$ but
\begin{equation}
H(\mathsf{tag}\|\mathsf{Encode}(M_j))
=
H(\mathsf{tag}\|\mathsf{Encode}(M^*)),
\end{equation}
which gives a collision for $H$ and constructs $\mathcal{B}_2$.

If $M^*$ was previously signed, the adversary can win only if $a^*\notin\Pi_{M^*}(s^*,t^*)$. This is impossible under the assumption that the deterministic policy checker is correct, because the gate accepts only after the membership test succeeds. Therefore, every successful authorization bypass yields either a signature forgery or a hash collision. Applying the union bound gives Equation~\eqref{eq:auth-bound}.
\end{proof}

\begin{remark}
The theorem does not prove that the signed mission is sensible. It proves that the gate enforces the policy that was actually signed.
\end{remark}

\subsection{Mission Non-Escalation}
\begin{lemma}[Action-set non-escalation]
Let $\mathcal{A}_M(s,t)=\Pi_M(s,t)$. For every action accepted by AuthGuard-R,
\begin{equation}
a\in\mathcal{A}_M(s,t).
\end{equation}
Therefore, an untrusted planner cannot enlarge the effective action set by changing its text output.
\end{lemma}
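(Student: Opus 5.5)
The lemma follows from the control flow of Algorithm~\ref{alg:authguard}. The second sentence then follows from how the action set is defined. I would argue by inspecting every path through Algorithm~\ref{alg:authguard} that ends in \textsf{accept}.

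\textbf{Step 1: acceptance implies the membership test passed.} The algorithm has exactly one \textsf{accept} return, on the final line. Reaching it requires falling through every earlier conditional, including the one testing $a\notin\Pi_M(s,t)$. So any accepted action satisfied $a\in\Pi_M(s,t)$ at the state $s$ and time $t$ supplied to the gate. Here $s$ comes from the trusted state estimator and $t$ from the trusted clock, not from the planner. Invoking the standing assumption of Section~\ref{sec:proofs} that the policy checker correctly decides membership in $\Pi_M(s,t)$, this gives $a\in\mathcal{A}_M(s,t)$.

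\textbf{Step 2: the planner cannot influence $M$, $s$, or $t$.} I would argue that $\Pi_M(s,t)$ depends only on three inputs, none of which the planner controls:
\begin{itemize}
\item the signed mission $M$;
\item the trusted state $s$;
\item the trusted time $t$.
\end{itemize}
By the authorization soundness theorem, except with probability at most $\Adv^{\mathsf{euf}}_{\Sigma,\mathcal{B}_1}(\kappa)+\Adv^{\mathsf{coll}}_{H,\mathcal{B}_2}(\kappa)$, the $M$ used by the gate is one actually signed by the operator. Mission fields are immutable after verification, and provenance rules deny untrusted labels on privileged fields. Hence text output from the planner (label $\mathsf{model}$), camera, audio, or retrieval can change only the proposed $a$, never the set $\Pi_M(s,t)$ it is tested against. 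Varying the planner's output therefore ranges over candidate actions while the acceptance set stays fixed. The set of accepted actions is thus contained in $\mathcal{A}_M(s,t)$, which is exactly non-escalation.

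\textbf{Main obstacle.} The delicate point is the word ``effective'': one must rule out planner-controlled data reaching the policy indirectly. The two routes are the normalizer, which could fill mission-relevant fields from untrusted text, and the state estimate $s$, which could be spoofed so that a guard evaluates differently. I would handle both by appeal to the trust assumptions. The normalizer is in the trusted computing base and must reject rather than infer missing fields. The adversary does not control the approved state estimator. I would state explicitly that the lemma holds relative to these assumptions and the correctness of the checker, mirroring the remark after the soundness theorem.
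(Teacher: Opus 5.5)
Your proposal is correct and follows the paper's proof. Both arguments rest on two points: acceptance forces the membership test $a\in\Pi_M(s,t)$ against the fixed policy derived from the verified token, and the planner cannot alter $M$ without a new valid signature. Your explicit negligible-probability qualifier via the soundness bound, and your appeal to the trusted normalizer and state estimator, make precise what the paper's one-paragraph argument leaves implicit.
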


\begin{proof}
The gate evaluates the action against the fixed policy $\Pi_M$ obtained from the verified mission token. The planner may output any string or normalized action, but acceptance requires membership in $\Pi_M(s,t)$. Since the planner cannot modify $M$ without a valid new signature, it cannot change $\mathcal{A}_M(s,t)$. Thus, every accepted action remains in the original action set.
\end{proof}

\begin{corollary}[Plan-level non-escalation]
If every action in an executed plan passes AuthGuard-R, then the complete execution trace is authorized under Definition 3.2.
\end{corollary}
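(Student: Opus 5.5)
The corollary lifts the Action-set non-escalation lemma from one action to a whole trace, so the proof will be a direct induction-free pointwise argument over the steps of the executed plan. Fix an executed plan $P=(a_1,\ldots,a_\ell)$ with trace $\tau=(s_0,a_1,s_1,\ldots,a_\ell,s_\ell)$, where each $a_i$ was submitted to Algorithm~\ref{alg:authguard} at time $t_i$ in state $s_{i-1}$ and accepted. The target is exactly Equation~\eqref{eq:trace-auth}, namely $a_i\in\Pi_M(s_{i-1},t_i)$ for every $1\leq i\leq\ell$, which is the definition of an authorized trace (Definition 3.2).

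The key steps, in order, are as follows. First, I will make explicit that the state and time passed to the gate at step $i$ are the pair $(s_{i-1},t_i)$ that appears in the trace. This holds because the gate reads $s$ from the approved state estimator and $t$ from the trusted clock, both of which are in the trusted computing base. I will also note that $s_{i-1}=\delta(s_{i-2},a_{i-1})$ is the state actually reached, since actions below the enforcement point cannot be injected. Second, I will apply the Action-set non-escalation lemma to each accepted $a_i$ to obtain $a_i\in\Pi_M(s_{i-1},t_i)$. Third, I will argue that the same $M$ is used at every step. The planner cannot alter $M$ without a new valid signature, by the Authorization soundness theorem up to negligible probability. Mission updates produce a new version $M^{(v+1)}$, which I will handle by stating the corollary per signed mission version, or by noting that the trace segment under each version is authorized under that version. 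Conjoining over $i$ gives \eqref{eq:trace-auth}, equivalently $\mathsf{AA}(P,M)=1$.

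I expect the main obstacle to be the state–time alignment in the first step, not the per-step membership. The lemma speaks about the state handed to the gate, while Definition 3.2 speaks about the true trace state. If the estimator's state differs from $s_{i-1}$, or if time advances between the gate's decision and execution, the conclusion can fail. I would handle this by stating explicitly that the corollary is conditional on the trusted-state-estimator and trusted-clock assumptions of the threat model, identifying the gate's inputs with the trace quantities. Once that is done, the remainder is a one-line conjunction.
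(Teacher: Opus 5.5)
Your proposal is correct and matches the paper's proof, which applies the action-set non-escalation lemma to each executed action $a_i$ at its state $s_{i-1}$ and time $t_i$ to obtain Equation~\eqref{eq:trace-auth} for every $i$. Your extra care about identifying the gate's inputs with the trace's state and time (via the trusted state estimator and clock) and about holding the mission version fixed makes explicit assumptions that the paper's one-line proof leaves implicit.
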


\begin{proof}
Apply Lemma 9.2 to each executed action $a_i$ in its corresponding state $s_{i-1}$ and time $t_i$. This gives Equation~\eqref{eq:trace-auth} for every $i$.
\end{proof}

\subsection{Replay Resistance}
\begin{theorem}[Replay resistance]
Assume that the gate maintains a correct persistent registry and rejects every token whose mission state is \texttt{completed}, \texttt{revoked}, or \texttt{expired}. Then
\begin{equation}
\Adv^{\mathsf{replay}}_{\mathcal{A}}(\kappa)=0
\end{equation}
in the ideal model.
\end{theorem}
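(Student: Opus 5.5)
The argument is deterministic: I will show that no execution of the replay game can end with the adversary winning, so its advantage is exactly zero rather than merely negligible. The plan is to fix an arbitrary adversary $\mathcal{A}$ and an arbitrary run of $\mathsf{ReplayGame}_{\mathcal{A}}(\kappa)$, then trace the registry entry for the challenged mission instance. Let $\mathsf{mid}$ and $M.\mathsf{nonce}$ identify that instance. By the definition of the game, the challenger records a terminal state $\texttt{st}\in\{\texttt{completed},\texttt{revoked},\texttt{expired}\}$ for this instance in $\mathcal{R}$ before the adversary's replay phase begins.

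The first step is a monotonicity invariant on $\mathcal{R}$. Once an entry reaches a terminal state, it never returns to \texttt{issued} or \texttt{active}. This follows from two hypotheses together: the registry is correct and persistent, and, as the replay game explicitly assumes, it does not roll back. No transition of the mission state machine leaves a terminal state, and the adversary does not control the trusted registry under the threat model of Section~V. By induction over the sequence of gate invocations after termination, every later lookup of this mission instance therefore returns a terminal state.

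The second step is a case analysis on any later query $(T',a',s',t')$ that the adversary submits to Algorithm~\ref{alg:authguard}. The adversary wins only if $T'$ refers to the same mission instance, meaning the same $M$, or at least the same $\mathsf{mid}$ and nonce, because the game is defined in terms of that instance. The only paths to \textsf{accept} pass the registry check in the line testing whether ``$M.\mathsf{nonce}$ is revoked or mission state in $\mathcal{R}$ is invalid.'' By the invariant and the hypothesis that terminal states are rejected, this check fails, so the algorithm returns reject. Nothing else the adversary controls changes this outcome: the action, the state, the time, any unsigned fields, or a re-signing of the retained token. Algorithm~\ref{alg:authguard} is deterministic, so the winning event is empty and the probability is $0$. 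One-time privileged actions are handled the same way, using a consumed step identifier in $\mathcal{R}$ in place of the mission status.

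The main obstacle is stating precisely what ``the same mission instance'' means. Without care, an adversary could try to produce a token that the registry treats as a different instance, for example by altering $\mathsf{mid}$ or the nonce. I would handle this by noting that such a token is no longer a replay but a new mission. Its acceptance then falls under the authorization-bypass game and is bounded by the authorization soundness theorem through EUF-CMA security and collision resistance. The ideal-model zero bound covers only the replay event proper. I would state this split explicitly so that the theorem's equality to $0$ is not confused with the cryptographic bound.
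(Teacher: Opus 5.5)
Your proposal is correct and follows the paper's argument: Algorithm~\ref{alg:authguard} consults the registry before evaluating the policy, so the terminal (consumed) state of the same mission instance forces deterministic rejection, and the winning event has probability zero. Your explicit no-rollback invariant and your point that a token with an altered $\mathsf{mid}$ or nonce belongs to the authorization-bypass game rather than the replay game are sensible refinements that the paper's short proof leaves implicit; the paper raises rollback only in the remark that follows the theorem.
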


\begin{proof}
After the first valid mission execution, the challenger marks the nonce and mission identifier as consumed. On replay, Algorithm~\ref{alg:authguard} checks the registry before policy evaluation. The consumed state causes deterministic rejection. Therefore, no adversarial choice of planner output or environmental input can make the same mission instance pass. The success probability is zero under the stated registry assumption.
\end{proof}

\begin{remark}
If the registry can roll back after a crash, replay resistance requires stable storage, monotonic counters, or a trusted remote checkpoint. A random nonce alone is not sufficient when the system does not remember consumed missions.
\end{remark}

\subsection{Robot Binding}
\begin{theorem}[Cross-robot rejection]
Let token $T$ contain signed robot identifier $\mathsf{rid}_1$. If $R_2.\mathsf{id}\neq\mathsf{rid}_1$ and the robot-identity check is correct, then
\begin{equation}
\Adv^{\mathsf{robot}}_{\mathcal{A}}(\kappa)=0.
\end{equation}
\end{theorem}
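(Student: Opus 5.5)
The argument will follow the pattern of the replay-resistance proof: trace the deterministic control flow of Algorithm~\ref{alg:authguard} on $R_2$ and show that any accepting run would contradict either the identity-check hypothesis or unforgeability. First I need to pin down which mission the gate on $R_2$ actually evaluates. In the cross-robot game the adversary may rewrite every unsigned transport field, and it may also submit a modified mission $M'$ whose $\mathsf{rid}$ field equals $R_2.\mathsf{id}$. So I will split on whether the presented token carries the original signed mission $M$ or some $M'\neq M$.

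\textbf{Case 1: $M'=M$.} The signed field is still $M.\mathsf{rid}=\mathsf{rid}_1$. If execution reaches the identity check, that check compares $M.\mathsf{rid}$ with $R_2.\mathsf{id}$. Because $R_2.\mathsf{id}$ comes from a trusted local identity, namely the device certificate or secure element assumed in the threat model, the adversary cannot change it. The hypothesis $R_2.\mathsf{id}\neq\mathsf{rid}_1$ then forces a reject. Every other path through the algorithm also rejects or requests approval before reaching the accept line. So the accept line is unreachable for any planner output, environmental input, or transport modification, and the winning probability is exactly $0$.

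\textbf{Case 2: $M'\neq M$.} The adversary must supply a $\sigma'$ that verifies on $H(\mathsf{tag}\|\mathsf{Encode}(M'))$. Here I will invoke the authorization-soundness theorem: acceptance of an unqueried mission yields either an EUF-CMA forgery or a hash collision. This case is the main obstacle, because it contributes a negligible term rather than zero. The literal statement ``$=0$'' therefore holds only in the ideal model where signature verification is perfect, matching the phrase ``in the ideal model'' used in the replay theorem.

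I will handle this in one of two ways. The first option is to define the game so that the challenger fixes the token's mission (only transport fields are adversarial), which gives $0$ outright. The second option is to state that in the computational setting $\Adv^{\mathsf{robot}}_{\mathcal{A}}(\kappa)\le\Adv^{\mathsf{euf}}_{\Sigma,\mathcal{B}_1}(\kappa)+\Adv^{\mathsf{coll}}_{H,\mathcal{B}_2}(\kappa)$ by the same reduction as Equation~\eqref{eq:auth-bound}. I would present Case 1 as the core argument and Case 2 as a remark tying the result back to authorization soundness.
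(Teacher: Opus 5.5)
Your proposal is correct and follows essentially the same route as the paper. The paper also argues that an unchanged token fails the conjunctive $\mathsf{RobotMatch}$ check against $R_2$'s trusted local identity, and that rewriting $\mathsf{rid}_1$ requires a signature forgery or hash collision bounded as in the authorization-soundness theorem; your point that this second case contributes a negligible term rather than exactly $0$ (so the stated equality holds only in the ideal model, or for a game in which the signed mission is fixed) is a legitimate sharpening of something the paper's proof passes over silently.
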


\begin{proof}
The robot identifier is contained in the canonical mission encoding covered by the operator's signature. An adversary that changes $\mathsf{rid}_1$ to $\mathsf{rid}_2$ must either produce a new valid signature or find a different encoded mission with the same signed digest. These events are bounded by the assumptions used in Theorem 9.1.

When the unchanged token is presented to $R_2$, the gate compares the signed identifier with the trusted local identity of $R_2$. Since $R_2.\mathsf{id}\neq\mathsf{rid}_1$, the predicate $\mathsf{RobotMatch}(M,R_2)$ is false. The authorization predicate is a conjunction containing this check, so it evaluates to zero and the gate rejects before the action reaches the controller. Therefore, the adversary cannot win the cross-robot game under the stated assumptions.
\end{proof}

The theorem relies on a trusted local robot identity. A name supplied by the network or by the LLM is not sufficient because it can be rewritten. In practice, the identity may be bound to a device certificate, secure element, trusted platform module, or SROS2 enclave identity. The mission should also bind the relevant robot class or capability profile when software can be migrated between platforms.

Robot binding prevents a subtle form of escalation. Two robots may expose the same action name while having different physical strength, payload, sensors, or access privileges. A token that permits \texttt{move-object} on a small service robot must not automatically authorize the same request on an industrial arm. The evaluation therefore includes both identifier mismatch and capability-profile mismatch tests.

\subsection{Provenance Separation}
\begin{theorem}[Privileged-field noninterference]
Assume that field $f$ can be modified only by provenance labels in authorized set $\mathcal{L}_f$, and that any such modification requires a newly verified token. Then an adversary controlling only sources in $\mathcal{L}\setminus\mathcal{L}_f$ cannot change the effective value of $f$. Thus,
\begin{equation}
\Adv^{\mathsf{prov}}_{\mathcal{A}}(\kappa)=0
\end{equation}
in the ideal label-enforcement model.
\end{theorem}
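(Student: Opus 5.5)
The proof will be an invariant argument over the data flow from inputs to the policy the gate actually uses. First we fix what the \emph{effective value} of $f$ is. By the provenance-confusion game and Algorithm~\ref{alg:authguard}, the policy used for acceptance is $\Pi_M$, computed from the mission $M$ contained in the token that passed signature verification. So the effective value of $f$ at any decision point is $M.f$ for the currently verified token. The claim then reduces to this: for every reachable configuration, the verified mission feeding the membership test is either the originally signed $M$ or a mission obtained through a newly verified token.

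We would prove this by induction on the sequence of inputs processed by the normalizer and gate. The base case is the initial verified token, whose field $f$ has the signed value. In the inductive step, consider an input value $v$ carrying label $\lambda\in\mathcal{L}\setminus\mathcal{L}_f$. Under the ideal label-enforcement model, labels are assigned by the trusted normalizer according to the channel of entry, so the adversary cannot relabel $v$. The field-level rule $\mathsf{AllowedSrc}(f)=\mathcal{L}_f$ then says that $v$ may affect only observation or state fields, not $f$. Mission fields are immutable after verification, so $M.f$ is unchanged. The only remaining way to change $f$ is through an input with label in $\mathcal{L}_f$, and by hypothesis that requires a newly verified token. The adversary controls no such source, so this case cannot occur.

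Next we would address the fact that the planner's output is itself adversarial. Model-generated actions can mention any destination, but they carry the label $\mathsf{model}\notin\mathcal{L}_f$. By Lemma~9.2 on action-set non-escalation, they are tested against the fixed $\Pi_M(s,t)$ and do not redefine it. A value of $f$ that appears only in an action or explanation is therefore checked against $M.f$; it never replaces it. Because the winning event ``the effective policy changes $f$ without a newly authenticated update'' is empty in every execution, the probability is exactly $0$, with no negligible term.

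The main obstacle is pinning down the ideal model precisely enough that the induction is not circular. The statement's hypothesis (only $\mathcal{L}_f$ may modify $f$) is close to the conclusion. The real content is the claim that labels are unforgeable and that no state-dependent guard lets an observation indirectly change $f$; one example would be a trusted-sensor field that $\Pi_M$ reads. We would handle this by stating that $\Pi_M(s,t)$ depends on $s$ only through fields whose allowed sources are disjoint from the adversary's labels, or that state affects \emph{which} signed values apply but never supplies new values for $f$. Finally, we would remark that if the adversary could forge a token, the bound would instead reduce to Theorem~9.1, adding $\Adv^{\mathsf{euf}}+\Adv^{\mathsf{coll}}$ outside the ideal model.
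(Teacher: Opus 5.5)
Your proposal is correct and follows the paper's argument: the effective policy is derived only from the latest verified token, inputs with labels outside $\mathcal{L}_f$ cannot write $f$, a new value of $f$ can enter only through a newly signed token that the adversary cannot produce, and the invariant carries over every input event to any finite sequence. Your extra treatment of planner outputs via Lemma~9.2, unforgeable labels, and state-dependent guards makes the same idea more careful; it is not a different route.
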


\begin{proof}
The effective mission policy is derived only from the latest verified mission token. Inputs from labels outside $\mathcal{L}_f$ may influence the planner proposal or ordinary observation fields, but the provenance policy prevents them from writing field $f$. Since a new value of $f$ enters the gate only through a newly signed token, and the adversary controls no authorized source, the effective value remains unchanged. The argument holds after every input event, so it also holds for any finite sequence of adversarial inputs.
\end{proof}

This theorem formalizes why camera text such as ``new destination: Room 103'' cannot become an operator command. The text may affect the LLM proposal, but the gate still compares the proposal with the signed destination.

\subsection{Threshold-Approval Security}
\begin{theorem}[High-risk approval]
Suppose a high-risk action requires a secure $t$-of-$n$ threshold signature, and the adversary compromises fewer than $t$ signing authorities. Then the probability that an unauthorized high-risk action passes the approval check is negligible in $\kappa$.
\end{theorem}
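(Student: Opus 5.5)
The proof is a reduction to the unforgeability of the threshold signature scheme, structured like the proof of Theorem~9.1. First we fix the threshold model. A secure $t$-of-$n$ threshold signature scheme consists of distributed key generation producing a group public key $pk_G$ and shares $sk_1,\ldots,sk_n$, a signing protocol, and verification under $pk_G$. Its security game lets the adversary statically corrupt a set $C$ of fewer than $t$ parties and obtain their shares. It also gives the adversary a signing oracle through which honest parties participate in signing chosen messages. The adversary wins if it outputs a valid signature on a message for which no honest party was asked to sign.

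Next we fix the message being approved. Following the binding requirement in the risk-adaptive approval subsection, the approval message is a domain-separated digest $H(\mathsf{tag}_{\mathsf{appr}}\|\mathsf{mid}\|\mathsf{version}\|\mathsf{Encode}(a)\|\ldots)$. An ``unauthorized high-risk action'' is then an action $a^*$ with $\rho(a^*,s)\ge\tau_2$ whose encoded approval message was never submitted to the honest approvers through the legitimate approval process.

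Given an adversary $\mathcal{A}$ that gets such an action through $\mathsf{ApprovalValid}$ in Algorithm~\ref{alg:authguard}, we build $\mathcal{B}$. It receives $pk_G$ and the corrupted shares from its challenger. It embeds $pk_G$ as the organizational approval key in the signed mission or policy. It simulates the remaining gate components itself: the mission signing key, registry, and audit chain. Every legitimate approval request is answered by forwarding it to its own threshold signing oracle. When $\mathcal{A}$ causes acceptance of $(a^*,\sigma_{\mathsf{appr}}^*)$, the gate has verified $\sigma_{\mathsf{appr}}^*$ under $pk_G$. This splits into two cases:
\begin{itemize}
\item The message $m^*$ was never queried. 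Then $(m^*,\sigma_{\mathsf{appr}}^*)$ is a threshold forgery, and $\mathcal{B}$ outputs it.
\item Some queried $m_j\neq m^*$ shares the same digest. Then $\mathcal{B}_2$ outputs a collision for $H$.
\end{itemize}
Two further points close the argument. First, the LLM cannot lower the required count: the thresholds $\tau_1,\tau_2$ and $\mathsf{Req}$ live in the signed mission or authenticated policy, and the normalizer fixes the canonical type. Any attempt to downgrade $\mathsf{Req}(a^*,s)$ therefore either contradicts the correctness of the normalizer or risk function, or is itself an authorization bypass bounded by Theorem~9.1. Second, the union bound gives
\[
\Pr[\text{bypass}]\le \Adv^{\mathsf{thr}}_{\mathcal{B}_1}(\kappa)+\Adv^{\mathsf{coll}}_{H,\mathcal{B}_2}(\kappa)+\Adv^{\mathsf{auth}}_{\mathcal{B}_3}(\kappa),
\]
and each term is negligible.

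The main obstacle is simulating the honest approvers faithfully. The legitimate approvals that $\mathcal{A}$ observes must be exactly the threshold oracle's queries. The definition of ``unauthorized'' must also line up with ``message not queried.'' In particular, an approval for one room must not count as a queried message for another action. The action-specific encoding and mission version inside the approval message are what ensure this.

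If approval is instead implemented with individual signatures, the same argument applies per signer. Since at most $t-1$ signers are corrupted, any accepted set of $t$ distinct valid signatures must include one from an honest signer on an unqueried message. That signature is an EUF-CMA forgery. Guessing which honest signer it belongs to costs a factor of at most $n$ in the reduction.
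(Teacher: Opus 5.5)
Your reduction is sound as far as it goes, and it matches the paper's appeal to threshold unforgeability. That includes embedding $pk_G$, forwarding legitimate approvals to the threshold signing oracle, splitting an accepted $(m^*,\sigma^*_{\mathsf{appr}})$ into a forgery or a collision, charging any downgrade of $\mathsf{Req}$ to authorization soundness, and losing a factor of $n$ in the individual-signature variant. The gap is in the step you flag yourself. You make ``unauthorized'' coincide with ``message not queried'' by definition, and that drops the second case the paper's proof handles: \emph{replay} of a legitimately issued approval.

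Suppose the supervisors once approved $a^*$ (say, disabling an interlock or entering a restricted room) under mission $\mathsf{mid}$ and version $v$. Later, in the same mission version, the planner proposes $a^*$ again, in a different state or after the approval was meant to lapse, and presents the old $\sigma_{\mathsf{appr}}$. Your message $H(\mathsf{tag}_{\mathsf{appr}}\|\mathsf{mid}\|\mathsf{version}\|\mathsf{Encode}(a)\|\ldots)$ has nothing in it, and nothing in your use of $\mathsf{ApprovalValid}$, that ties an approval to a single use, a state, or a time window. So verification succeeds, and your $\mathcal{B}$ has nothing to output because $m^*$ was queried. A replayed signature is never a forgery in the unforgeability game, so no refinement of the simulation can catch this. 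Your remark about action-specific encoding rules out reuse for a \emph{different} action, not reuse for the same one.

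The paper closes this case by having the approval digest bind the state summary, expiry, and an approval counter, together with correct freshness checks at the gate. To repair your proof, put these fields explicitly into the signed message. Require $\mathsf{ApprovalValid}$ to reject an approval whose expiry has passed on the trusted clock, whose state summary disagrees with the trusted state, or whose counter is already marked consumed in the persistent registry. It should also mark the counter consumed on acceptance, under the same no-rollback assumption as replay resistance.

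Then define an unauthorized acceptance as any acceptance other than the single in-window use of an approval the honest authorities issued, and split on whether $m^*$ was queried. If not, your threshold-forgery or collision reduction applies. If so, the freshness checks reject every use other than the legitimate one, so this case contributes probability zero. Without this second case, your bound shows only that approvals cannot be fabricated, not that they cannot be reused, which is weaker than what the theorem and its proof are meant to establish.
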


\begin{proof}
A valid approval requires a threshold signature on the domain-separated digest of the mission identifier, action, state summary, expiry, and approval counter. With fewer than $t$ secret shares, the adversary cannot create a fresh valid threshold signature except with the forgery probability of the threshold scheme. Replaying a previous approval fails because the signed digest binds the action, state summary, expiry, and counter. Therefore, the success probability is negligible under threshold unforgeability and correct freshness checks.
\end{proof}

\subsection{Audit-Log Tamper Evidence}
\begin{theorem}[Tamper evidence of the anchored log]
Assume that $H$ is collision resistant and second-preimage resistant, and that the terminal digest of a log prefix is stored in a trusted external anchor. Any efficient adversary that modifies, inserts, deletes, or reorders a committed record while preserving the same terminal digest succeeds only with negligible probability.
\end{theorem}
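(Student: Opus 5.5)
The proof will be a standard backward-induction reduction along the hash chain of Equation~\eqref{eq:log}. Let the original committed log be $(R_1,\ldots,R_n)$ with digests $L_0,\ldots,L_n$. Here $L_i=H(L_{i-1}\|\mathsf{enc}(R_i))$, where $\mathsf{enc}(R_i)$ denotes the unambiguous encoding of $(\mathsf{mid},i,a_i,s_i,t_i,d_i,e_i)$. The adversary outputs a modified log $(R'_1,\ldots,R'_{n'})$ with recomputed digests $L'_0=L_0,\ldots,L'_{n'}$ such that $(R'_j)\neq(R_j)$ and $L'_{n'}=L_n$, the anchored value. I will build an adversary $\mathcal{B}$ against collision resistance of $H$. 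Given the adversary's output, $\mathcal{B}$ recomputes both chains and searches for an index at which two distinct inputs hash to the same value. Then $\Adv^{\mathsf{log}}_{\mathcal{A}}(\kappa)\leq\Adv^{\mathsf{coll}}_{H,\mathcal{B}}(\kappa)$. Second-preimage resistance is implied and is only mentioned, since the original log is fixed before the adversary acts.

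The key step is a walk backward from the terminal digest. Start from $L'_{n'}=L_n$. If $n,n'\geq 1$, compare the hash inputs $x=L'_{n'-1}\|\mathsf{enc}(R'_{n'})$ and $y=L_{n-1}\|\mathsf{enc}(R_n)$. There are two cases.
\begin{itemize}
\item If $x\neq y$, then $(x,y)$ is a collision and $\mathcal{B}$ outputs it.
\item If $x=y$, then unambiguous parsing, which requires a fixed-length digest field and a self-delimiting record encoding, gives $L'_{n'-1}=L_{n-1}$ and $R'_{n'}=R_n$. Decrement both indices and repeat.
\end{itemize}
The walk either finds a collision or runs until one chain reaches index $0$.

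The boundary case needs care because it covers truncation, prefix deletion, and extension. Suppose, say, $n'<n$ and the walk reaches $L'_0=L_0=L_{n-n'}$ with all intermediate records equal. Then $L_{n-n'}$, an output of $H$ on a nonempty input, equals the genesis value $L_0$. I will handle this in two ways.
\begin{itemize}
\item The genesis value is $L_0=H(\mathsf{tag}_{\log}\|\mathsf{mid})$. Its input has a distinct domain-separation tag, so it cannot parse as a record input, and the pair is again a collision.
\item Independently, the record index $i$ is hashed inside each record. A record that claims position $i$ cannot appear at position $j\neq i$ after the walk aligns the chains. This also rules out reordering, since aligned records must have matching indices.
\end{itemize}
Once both chains reach index $0$ together, the walk has shown the logs are identical, which contradicts the assumption that the adversary modified the log.

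I expect the main obstacle to be this encoding and length issue, not the collision extraction itself. Concatenation in Equation~\eqref{eq:log} is only injective under a prefix-free encoding. Without the explicit index and the genesis tag, a length-changing modification could otherwise escape the argument. I will therefore state unambiguous encoding and a tagged genesis value explicitly as hypotheses, matching the audit-tampering game. I will also note that the guarantee covers only the prefix up to the anchored digest. Records appended after the last anchor are unprotected, consistent with the anchor-interval discussion.
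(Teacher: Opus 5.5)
Your proof is correct, and its core is the same as the paper's: two hash chains that end in the same anchored digest must merge, and the merge point gives two distinct inputs with equal outputs. The decomposition is different.

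The paper walks forward from the first changed position $j$ to the first merge point. It treats a change at $j$ that already preserves its own digest as a separate second-preimage case, and it handles suffix deletion by simply asserting that the terminal digest no longer matches. Its position-$j$ bookkeeping tacitly aligns the two logs index by index. As a result, two situations are not actually argued: insertions or prefix deletions that shift positions, and the degenerate event that the genesis value coincides with an intermediate digest of the other chain.

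Your backward walk starts from the one place where equality is known, the anchor. This has three consequences:
\begin{itemize}
\item It covers logs of different lengths uniformly.
\item It gives a single reduction $\Adv^{\mathsf{log}}_{\mathcal{A}}(\kappa)\leq\Adv^{\mathsf{coll}}_{H,\mathcal{B}}(\kappa)$, since the paper's second-preimage case is just a collision whose merge point is $j$.
\item It makes explicit the encoding hypotheses that the paper's argument needs but leaves implicit: a fixed-length digest field, injective record encoding, a tagged genesis value, and a positional index. The paper's setup does support these, since the audit chain uses its own tag, hashes $i$ into every record, and the audit-tampering game invokes unambiguous encoding.
\end{itemize}

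One small correction. Collision resistance suffices because any second preimage is itself a collision, not because the original log is fixed before the adversary acts. In fact the adversary controls the planner and so influences the logged actions. That is exactly why collision resistance, rather than second-preimage resistance alone, is the appropriate assumption.
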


\begin{proof}
Let the first changed record occur at position $j$. The input to the hash at position $j$ differs between the original and modified logs. For the final anchored digest to remain unchanged, the two chains must later merge to the same hash value. At the first merge point, two distinct hash inputs produce the same output, yielding a collision. If the adversary changes a fixed record while preserving the immediate digest, it yields a second preimage. If it deletes a suffix, the terminal digest no longer matches the external anchor. Therefore, an undetected modification implies a violation of the stated hash assumptions.
\end{proof}

\subsection{Dual-Gate Trace Composition}
Let $G_A(s,a,t)$ be the authorization decision and $G_S(s,a)$ be the safety decision. The controller executes only when
\begin{equation}
\Exec(s,a,t)=G_A(s,a,t)\land G_S(s,a).
\label{eq:execute}
\end{equation}

\begin{theorem}[Trace-level dual-gate soundness]
Assume that $G_A$ is authorization-sound and $G_S$ is safety-sound for their stated models. Let
\begin{equation}
\tau=(s_0,a_1,s_1,\ldots,a_\ell,s_\ell)
\end{equation}
be any trace produced by the controller under Equation~\eqref{eq:execute}. Then $\tau$ is both authorized under $M$ and safe under $\mathcal{S}$.
\end{theorem}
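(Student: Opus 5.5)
The plan is to argue step by step along the trace, using the execution rule in Equation~\eqref{eq:execute} as the only bridge between the two gates. First I will pin down what ``authorization-sound'' and ``safety-sound'' mean in a form usable here. For $G_A$: whenever $G_A(s,a,t)=1$ on a state, action and time actually presented to the gate, we have $a\in\Pi_M(s,t)$. This is exactly the conclusion of Lemma~9.2 (Action-set non-escalation), and it is backed cryptographically by Theorem~9.1, so it holds except with probability $\Adv^{\mathsf{auth}}_{\mathcal{A}}(\kappa)$. For $G_S$: whenever $G_S(s,a)=1$, we have $\mathcal{S}(s,a)=1$. I will take this as the stated hypothesis on the safety model and not try to prove it.

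Next, I fix a trace $\tau=(s_0,a_1,s_1,\ldots,a_\ell,s_\ell)$ produced by the controller. By the definition of a trace, $s_i=\delta(s_{i-1},a_i)$. The controller forwards $a_i$ only if $\Exec(s_{i-1},a_i,t_i)=1$. Expanding the conjunction gives both $G_A(s_{i-1},a_i,t_i)=1$ and $G_S(s_{i-1},a_i)=1$. Applying the two soundness properties then yields $a_i\in\Pi_M(s_{i-1},t_i)$ and $\mathcal{S}(s_{i-1},a_i)=1$.

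I then quantify over $1\le i\le\ell$. This gives Equation~\eqref{eq:trace-auth} and Equation~\eqref{eq:trace-safe} simultaneously, so $\tau$ is authorized and safe. For the authorization half, this is just Corollary~9.3 (Plan-level non-escalation). Equivalently, $\mathsf{AA}(P,M)=\mathsf{SA}(P)=1$, so $\SCMH(P,M,s_0)=0$ for every executed plan. I would present the argument as an induction on $i$, or simply as a per-index argument, since each step's conclusion does not depend on earlier steps beyond $s_{i-1}$ being the true current state.

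The main obstacle is making sure the state and time each gate evaluates at step $i$ are exactly the $s_{i-1}$ and $t_i$ that appear in the trace. If the approved state estimator or trusted clock drifts from the real transition state, the per-step implication breaks. I will therefore state explicitly that both gates read $s_{i-1}$ from the trusted state estimator and $t_i$ from the trusted clock, and that no command reaches $C$ except through the enforcement point. Both assumptions are already listed in the threat model. A second, smaller point is the probabilistic caveat. Since the authorization half relies on Theorem~9.1, I would note that the conclusion holds with probability at least $1-\Adv^{\mathsf{auth}}_{\mathcal{A}}(\kappa)$, which is overwhelming. In the ideal model with a correct checker it holds deterministically.
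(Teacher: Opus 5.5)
Your proposal is correct and matches the paper's argument. The paper likewise expands Equation~\eqref{eq:execute} at each step into acceptance by both gates in state $s_{k-1}$, then applies the two soundness hypotheses; it phrases this as an induction on the number of executed actions, which, as you note, is really a per-index argument. Your explicit requirement that both gates read the true $s_{i-1}$ and $t_i$ from the trusted state estimator and clock is an assumption the paper's proof uses only implicitly, so stating it is a worthwhile addition.
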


\begin{proof}
We prove the statement by induction on the number of executed actions.

For the base case, consider $a_1$. The controller executes it only if $G_A(s_0,a_1,t_1)=1$ and $G_S(s_0,a_1)=1$. Authorization soundness gives $a_1\in\Pi_M(s_0,t_1)$. Safety soundness gives $\mathcal{S}(s_0,a_1)=1$. Thus, the one-action trace is authorized and safe.

For the induction step, assume that the prefix through $a_{k-1}$ is authorized and safe and reaches state $s_{k-1}$. Action $a_k$ is executed only when both gates accept in $s_{k-1}$. Therefore,
\begin{equation}
a_k\in\Pi_M(s_{k-1},t_k)
\end{equation}
and
\begin{equation}
\mathcal{S}(s_{k-1},a_k)=1.
\end{equation}
Appending $a_k$ preserves both properties. By induction, every action in $\tau$ is authorized and safe.
\end{proof}

\begin{corollary}[Blocking safety-compliant mission hijacking]
Under the assumptions of Theorem 9.8, AuthGuard-R combined with the safety gate does not execute any plan $P$ satisfying $\SCMH(P,M,s_0)=1$.
\end{corollary}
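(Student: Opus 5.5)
The plan is a proof by contradiction that reuses Theorem 9.8 (trace-level dual-gate soundness). Suppose some plan $P=(a_1,\ldots,a_\ell)$ with $\SCMH(P,M,s_0)=1$ were executed by the dual-gate system. I need to be precise about what ``executed'' means. Every action of $P$ reaches the controller only through Equation~\eqref{eq:execute}, so executing the plan means producing a trace $\tau=(s_0,a_1,s_1,\ldots,a_\ell,s_\ell)$ in which each $a_i$ satisfied $\Exec(s_{i-1},a_i,t_i)=1$.

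The argument then has three steps.

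\textbf{Step 1.} Apply Theorem 9.8 to $\tau$. This gives $a_i\in\Pi_M(s_{i-1},t_i)$ for every $1\le i\le\ell$, which is Equation~\eqref{eq:trace-auth}. Each indicator $\mathbf{1}[a_i\in\Pi_M(s_{i-1},t_i)]$ therefore equals $1$, so the product $\mathsf{AA}(P,M)=1$.

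\textbf{Step 2.} Substitute into Equation~\eqref{eq:scmh}. Because $\mathsf{AA}(P,M)=1$, we get $\SCMH(P,M,s_0)=\mathsf{X}\cdot\mathsf{SA}\cdot 0=0$, contradicting the hypothesis. No assumption about $\mathsf{X}$ or $\mathsf{SA}$ is needed, because the authorization factor alone forces the product to zero. Safety soundness is part of Theorem 9.8 but is not used here.

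\textbf{Step 3 (consistency of indices).} The states and times used in $\mathsf{AA}$ must be the same ones at which the gates were consulted. $\SCMH$ is defined using the trace states $s_i=\delta(s_{i-1},a_i)$ obtained from $s_0$. I would therefore assume, as the paper implicitly does, that the controller's executed states agree with $\delta$, that is, the trusted state estimator is correct. Under that assumption the evaluation points coincide and Step 1 applies verbatim.

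I expect Step 3 to be the main obstacle, together with a related subtlety about partial execution. Theorem 9.8 is ``authorization-sound'' in the ideal sense, but Theorem 9.1 only gives soundness except with negligible probability. The clean statement therefore holds in the ideal model, which is how Theorem 9.8 is phrased. Otherwise the conclusion should read ``except with probability at most $\Adv^{\mathsf{euf}}+\Adv^{\mathsf{coll}}$,'' and I would add a remark saying so.

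Partial execution needs its own short remark. The dual gate may reject the first unauthorized action $a_j$ after executing the authorized prefix $a_1,\ldots,a_{j-1}$. The corollary concerns the full plan $P$, which is never completed. Only its authorized prefix runs, and that prefix has $\mathsf{AA}=1$ by the same argument.
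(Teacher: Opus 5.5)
Your proposal is correct and is essentially the paper's argument. The paper argues directly that an action of an $\SCMH$ plan lying outside $\Pi_M$ is rejected by the authorization gate, so Equation~\eqref{eq:execute} blocks completion. You reach the same fact by contradiction through the trace-level conclusion of Theorem 9.8: full execution would force $\mathsf{AA}(P,M)=1$, hence $\SCMH=0$. Your caveats are accurate refinements that the paper leaves implicit: state/time consistency, the ideal-model reading of authorization soundness, and partial execution of an authorized prefix. The paper's proof likewise concludes only that the \emph{complete} plan is not executed.
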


\begin{proof}
A successful mission-hijacking plan contains at least one action outside $\Pi_M$. Authorization soundness causes the authorization gate to reject that action. Equation~\eqref{eq:execute} then evaluates to zero, so the controller does not execute the complete plan.
\end{proof}

\subsection{Scope of the Proofs}
The proofs establish properties of the encoded mission, the deterministic policy checker, and the cryptographic mechanisms. They do not prove that the structured mission perfectly captures every unstated human intention. If an operator confirms the wrong destination or grants an unnecessarily broad object set, AuthGuard-R will faithfully enforce that policy. Human-interface design and mission review are therefore necessary parts of deployment.

The proofs also assume that normalized actions accurately represent their physical effect. A parser bug that maps two different tool calls to the same canonical action can violate this assumption without forging a signature. The same issue appears in trusted state. If localization reports Room 101 while the robot is physically in Room 103, a correct symbolic check may approve an action in the wrong physical place. Independent testing, sensor fusion, attestation, and low-level safety limits are needed to reduce these risks.

The safety-composition theorem is conditional on the correctness of the selected safety gate. It does not claim that the gate models every collision, human factor, or long-term consequence. Similarly, the threshold proof assumes independence and secure storage of approval keys. A malicious authorized operator can still sign a harmful mission, and cryptography cannot decide whether the operator's decision is ethical.

Finally, the model does not prevent network flooding, power failure, physical obstruction, or denial of service against the gate. It provides a fail-closed authorization decision for actions that reach the enforcement point. Emergency behavior and availability are treated separately in the design and discussion sections.

\section{Complexity and Performance Analysis}
\subsection{Per-Mission and Per-Action Cost}
Let $C_{\mathsf{sig}}$ be the time for one signature verification, $C_H$ the time for one hash evaluation, and $m$ the number of policy predicates evaluated for an action. Let $C_j$ be the cost of predicate $j$. If the signature is verified for every action, the authorization cost is
\begin{equation}
T_A=C_{\mathsf{sig}}+C_H+\sum_{j=1}^{m}C_j+C_{\mathsf{log}}.
\label{eq:cost-action}
\end{equation}

A more efficient implementation verifies the mission token once, caches the verified mission in protected memory, and checks freshness before each action. For a plan of length $\ell$, the amortized cost becomes
\begin{equation}
\overline{T}_A=\frac{C_{\mathsf{sig}}+C_H}{\ell}+\sum_{j=1}^{m}C_j+C_{\mathsf{log}}.
\label{eq:amortized}
\end{equation}

Most policy predicates are set membership, range, or equality tests. With hashed object and region identifiers, their expected lookup cost is constant. State predicates may be more expensive when they call a geometric or symbolic checker.

\subsection{End-to-End Latency}
Let $T_N$ be normalization time, $T_V$ signature and certificate verification time, $T_P$ deterministic policy-checking time, $T_A$ approval-processing time, $T_L$ audit time, and $T_S$ safety-gate time. The decision latency for one action is
\begin{equation}
T_{\mathsf{total}}=T_N+T_V+T_P+T_A+T_L+T_S.
\label{eq:latency}
\end{equation}
The LLM planning time is reported separately because it depends strongly on the selected model and deployment. Separating it prevents model latency from hiding the overhead introduced by AuthGuard-R.

Signature verification need not be repeated in full for every action when the gate maintains a secure active-mission cache. The complete token is verified when the mission begins or changes, and later actions refer to the verified mission identifier and version. Per-action checks still include freshness, state, scope, provenance, and consumed-step status. The evaluation should report both cold-start latency and steady-state latency.

For physical robots, tail latency is more important than only the mean. A small average overhead with rare long pauses may still harm control. The experiments therefore report median, 95th percentile, and 99th percentile latency under benign load and under repeated rejected attacks.

\subsection{Token and Log Size}
Let $|M|$ denote the canonical mission length, $|\sigma|$ the signature length, and $|\mathsf{cert}|$ the certificate or key-reference length. A token has approximate size
\begin{equation}
|T|=|M|+|\sigma|+|\mathsf{cert}|+O(1).
\end{equation}
The mission length grows with the number of allowed objects, regions, alternatives, state guards, and approval rules. Compact numeric identifiers can reduce size, while human-readable JSON increases inspectability.

Each audit record stores or commits to the normalized action, state digest, timestamp, decision, reason code, and previous-chain digest. If the complete action and selected evidence are stored, the log grows linearly with the number of decisions. If large sensor objects are kept elsewhere, the record stores only their content digest and protected reference.

Post-quantum signatures generally increase token size compared with Ed25519. This overhead is usually small relative to images or model prompts, but it may matter for low-bandwidth robot links and long certificate chains. The evaluation therefore reports raw token bytes, compressed bytes where compression is permitted, and cumulative log growth per mission.

\subsection{Threshold Approval Cost}
Threshold approval adds cryptographic work and human or service coordination. With independent signatures, the gate verifies $r$ approvals for an action requiring $r$ authorities, giving verification cost approximately $rT_V$. A threshold-signature construction may produce one compact aggregate signature, but it introduces a distributed signing protocol and additional setup assumptions.

The dominant delay may be the time required for supervisors to respond rather than the cryptographic computation. Experiments should therefore separate machine processing time from approval waiting time. Simulation can measure cryptographic overhead, while a small user study or scripted approval service can estimate operational delay.

Approval caching must be narrow. A supervisor may pre-approve a class of low-risk actions for one mission, but high-risk authorization should remain bound to the exact robot, mission version, action scope, and expiry. Broad reusable approval would weaken the least-authority design.

\subsection{Failure Policy}
AuthGuard-R fails closed for ordinary mission actions. Invalid signatures, expired missions, missing fields, denied provenance, stale state, registry errors, and policy-checker exceptions cause rejection rather than best-effort execution. The gate returns a reason code and may request clarification or a newly signed mission.

Fail-closed behavior can reduce availability when the gate, clock, or network is unavailable. The system therefore distinguishes mission progress from emergency safety. A small emergency capability set may be pre-authorized, for example stop motion, release dangerous force, return to a marked safe position, or request human assistance. These actions are encoded in a separate policy with narrow state guards.

The implementation must avoid unsafe fallback paths. A controller should not execute the raw LLM action merely because the authorization node crashes. ROS 2 launch and watchdog configuration should make the authorization node part of the mandatory actuation path.

\section{Implementation and Experimental Evaluation}
\label{sec:methodology}
\label{sec:pilot}

The implementation is designed to test the central claim of the paper in a direct sequence. The formal results in Section~\ref{sec:proofs} show that, under the stated assumptions, an action outside the signed mission cannot pass the authorization gate. The experiments ask whether this distinction remains meaningful when the action is produced by an actual LLM planner. We therefore evaluate three connected questions. First, can an action be physically safe and still violate the authenticated mission? Second, do real LLM planners produce such actions when untrusted text, speech, or visual observations attempt to modify the mission? Third, when the planner does produce an unauthorized action, does AuthGuard-R reject it without unnecessarily rejecting benign actions? The experiments are organized in the same order. We first isolate the safety--authorization gap with hand-constructed actions, then expose two LLM planners to fixed and adaptive mission-hijacking attacks, and finally compare the results across models. In this way, each stage tests one part of the argument and the final cross-model result connects the implementation back to the formal authorization property.

\subsection{Prototype, Execution Path, and Evaluation Criteria}

The prototype follows the architecture defined earlier in the paper. A natural-language mission is converted into a canonical token containing the robot identity, allowed actions, objects, regions, numerical bounds, trusted state conditions, provenance restrictions, validity period, nonce, and approval policy. After operator confirmation, the token is signed with Ed25519. The LLM receives the task and environmental observations, but it cannot modify the signed token. Its output is converted into the typed action of Equation~\eqref{eq:action}; missing, ambiguous, or unsupported protected fields are rejected rather than inferred optimistically. The normalized action is then evaluated by AuthGuard-R using Equation~\eqref{eq:auth-predicate} and by an independent safety gate. An action reaches execution only if both decisions are positive, as specified by Equation~\eqref{eq:execute}. The prototype implements canonical mission encoding, signature generation and verification, per-field provenance labels, persistent mission state, nonce and replay checks, robot binding, risk-adaptive approval, and a hash-chained audit log. The core implementation contains approximately 1{,}800 lines of Python~3.11 and uses version~41.0.7 of the \texttt{cryptography} library. For the open-source Qwen experiment, the model-facing harness was executed locally through Ollama using Python~3.14; the authorization logic and decision rule were unchanged. In a ROS~2 deployment, the same logic can be placed in an enforcement node between the planner and the action server so that raw LLM output never publishes directly to an actuator; SROS2 can protect node identity and communication, while AuthGuard-R provides the mission-level semantic check~\cite{rossecurity}.

The attack procedure is chosen to challenge the authorization boundary rather than merely the model's refusal behavior. Six planner-level categories are used: goal substitution, capability escalation, object substitution, provenance confusion, area expansion, and parameter escalation. Replay, cross-robot token reuse, and threshold-approval bypass are tested separately because they attack protocol state rather than requiring an LLM-generated command. In the fixed-template setting, an authenticated mission is accompanied by an injected observation from an untrusted channel such as camera OCR, an audio transcript, or a printed note; the injected instruction is strengthened over at most three rounds. In adaptive MissionPAIR, a second model observes the target planner's previous response and generates a revised injection. A planner refusal is not counted as a defense success because no unauthorized action reaches AuthGuard-R. We record planner compliance only when the model actually produces an action that changes the protected mission field. The ``Blocked'' column in the live-model tables is therefore conditional on planner compliance and measures the behavior of AuthGuard-R after the gate is genuinely challenged. Provenance is assigned by the harness rather than by trusting the model's self-report, and the target planner is not given the signed mission object, internal policy schema, or detailed authorization reason codes.

To separate security from utility, the experiments report both attack and benign outcomes. If $N_A$ is the number of adversarial trials and $N_U$ is the number of adversarial trials that result in an executed unauthorized action, the unauthorized-action success rate is
\begin{equation}
\mathsf{UASR}=\frac{N_U}{N_A}.
\label{eq:uasr}
\end{equation}
The safety-compliant mission-hijacking rate is the fraction of adversarial trials satisfying Definition~\ref{def:scmh},
\begin{equation}
\mathsf{SCMHR}=\frac{1}{N_A}\sum_{k=1}^{N_A}\SCMH(P_k,M_k,s_{0,k}).
\label{eq:scmhr}
\end{equation}
For $N_B$ benign trials, with $N_{BA}$ accepted and $N_{BR}$ rejected, we use $\mathsf{BTCR}=N_{BA}/N_B$ and $\mathsf{FRR}=N_{BR}/N_B$. We also report the supplementary security--utility score
\begin{equation}
\mathsf{SU}=\frac{2(1-\mathsf{UASR})\mathsf{BTCR}}{(1-\mathsf{UASR})+\mathsf{BTCR}},
\label{eq:su}
\end{equation}
when the denominator is nonzero. A high $\mathsf{SU}$ therefore requires both a low unauthorized-execution rate and a high benign completion rate; it cannot be improved simply by stopping the robot. Binary planner-compliance rates are reported with Wilson 95\% confidence intervals.

\subsection{Controlled Validation: Is Safety Alone Sufficient?}
\label{sec:pilot-handbuilt}

The first experiment deliberately removes model uncertainty and asks the most basic question behind the paper: if an action is physically safe but outside the signed mission, what does each defense do? Sixteen unit tests first verify the implementation of signature checks, replay protection, robot binding, trusted-state requirements, provenance checks, approval signatures, and audit-log integrity; all sixteen pass. We then construct eleven unauthorized actions that are intentionally chosen to remain physically acceptable under the independent safety rules, together with seven benign controls. The attacks cover goal substitution, object substitution, area expansion, capability escalation, parameter escalation, cross-robot reuse, provenance confusion, replay, and threshold-approval bypass. Each action is evaluated with no defense, with the safety gate alone, and with the complete dual-gate system. Table~\ref{tab:pilot-results} gives the aggregate result. The no-defense and safety-only configurations both have $\mathsf{UASR}=1$ and $\mathsf{SCMHR}=1$ because all eleven attacks are executable and were intentionally constructed to satisfy the physical-safety constraints. This is the behavior the paper is intended to expose: a safety decision can be correct and still be insufficient for mission integrity. With AuthGuard-R added, $\mathsf{UASR}$ and $\mathsf{SCMHR}$ both fall to zero, while $\mathsf{BTCR}$ remains 1 and $\mathsf{FRR}$ remains 0. Consequently, the supplementary $\mathsf{SU}$ score changes from 0 for the first two configurations to 1 for the dual gate. The important interpretation is not merely that attacks were blocked; it is that the security improvement did not come from rejecting benign work. In this controlled suite, the authorization gate rejects exactly the actions that cross the signed mission boundary and preserves all seven benign controls.

\begin{table}[t]
\centering
\caption{Hand-constructed validation: security and utility results.}
\label{tab:pilot-results}
\resizebox{\columnwidth}{!}{%
\begin{tabular}{lccccc}
\toprule
System & UASR $\downarrow$ & SCMHR $\downarrow$ & BTCR $\uparrow$ & FRR $\downarrow$ & SU $\uparrow$ \\
\midrule
No defense & 1.00 & 1.00 & 1.00 & 0.00 & 0.00 \\
Safety-only pilot baseline & 1.00 & 1.00 & 1.00 & 0.00 & 0.00 \\
AuthGuard-R + safety gate & 0.00 & 0.00 & 1.00 & 0.00 & 1.00 \\
\bottomrule
\end{tabular}}
\end{table}

Table~\ref{tab:pilot-category} breaks the same experiment down by attack type and shows that the aggregate result is not caused by one easy category. The entries denote successful unauthorized executions over attempted attacks. Without authorization, the no-defense and safety-only configurations accept every tested attack, including the two provenance-confusion cases. The dual gate changes every corresponding entry to zero. This category-level result is important because the attacks violate different mission fields: some change the destination or object, some enlarge capability or parameter scope, and others exploit robot identity, provenance, freshness, or approval state. The common outcome across these different cases is consistent with the formal design: AuthGuard-R does not need a separate learned detector for each attack name; it checks whether the resulting action remains inside the authenticated policy. Thus Tables~\ref{tab:pilot-results} and~\ref{tab:pilot-category} provide the first empirical bridge from the authorization-soundness argument in Section~\ref{sec:proofs} to the implemented decision procedure.

\begin{table}[t]
\centering
\caption{Hand-constructed validation by attack category. Entries show successful unauthorized executions over attempted attacks.}
\label{tab:pilot-category}
\resizebox{\columnwidth}{!}{%
\begin{tabular}{lccc}
\toprule
Category & No defense & Safety-only & Dual gate \\
\midrule
Goal substitution & 1/1 & 1/1 & 0/1 \\
Object substitution & 1/1 & 1/1 & 0/1 \\
Area expansion & 1/1 & 1/1 & 0/1 \\
Capability escalation & 1/1 & 1/1 & 0/1 \\
Parameter escalation & 1/1 & 1/1 & 0/1 \\
Cross-robot reuse & 1/1 & 1/1 & 0/1 \\
Provenance confusion & 2/2 & 2/2 & 0/2 \\
Replay & 1/1 & 1/1 & 0/1 \\
Threshold-approval bypass & 1/1 & 1/1 & 0/1 \\
\bottomrule
\end{tabular}}
\end{table}

\subsection{Live LLM Evaluation: Do Real Planners Cross the Mission Boundary?}

The controlled experiment establishes that the gate behaves correctly when the unauthorized action is supplied directly, but it does not show that an LLM planner will actually generate such actions. The next experiment therefore places the attack before the planner. We evaluate Claude Haiku~4.5 and Qwen2.5~7B because they exhibit different instruction-following and refusal behavior. The purpose is not to rank the models; it is to test whether the authorization result depends on a particular planner. For every successful attack, the planner's output must first cross the authenticated mission boundary, after which exactly the same deterministic authorization check is applied.

\subsubsection{Claude Haiku~4.5}
\label{sec:pilot-live-llm}
\label{sec:pilot-adaptive}

Table~\ref{tab:pilot-live-llm} reports the fixed-template Claude experiment with ten seeds for each of the six categories, for a total of 60 trials. The planner follows the injected mission change in 11 trials, giving a pooled compliance rate of 18.3\%. The distribution is highly nonuniform: object substitution succeeds in all ten seeds, parameter escalation succeeds once, and the remaining four categories produce no mission-violating output. This table therefore reveals two different facts. First, Claude's own refusal behavior prevents many attacks from reaching the authorization layer. Second, refusal is incomplete: the model still produces eleven actions that contradict the signed mission. Those eleven cases are the relevant security test for AuthGuard-R, and all 11 are rejected. The result supports the intended separation of responsibilities: model alignment can reduce how often an attacker obtains a bad proposal, but the authorization gate remains necessary for the cases in which alignment fails.

\begin{table}[t]
\centering
\caption{Fixed-template attacks against Claude Haiku~4.5. Wilson 95\% confidence intervals are reported for planner compliance.}
\label{tab:pilot-live-llm}
\resizebox{\columnwidth}{!}{%
\begin{tabular}{lcc}
\toprule
Category & Planner complied & Blocked by AuthGuard-R \\
\midrule
Goal substitution & 0/10~[0.00, 0.28] & --- \\
Capability escalation & 0/10~[0.00, 0.28] & --- \\
Object substitution & 10/10~[0.72, 1.00] & 10/10 \\
Provenance confusion & 0/10~[0.00, 0.28] & --- \\
Area expansion & 0/10~[0.00, 0.28] & --- \\
Parameter escalation & 1/10~[0.02, 0.40] & 1/1 \\
\midrule
Pooled (60 trials) & 11/60 = 18.3\%~[0.11, 0.30] & 11/11 \\
\bottomrule
\end{tabular}}
\end{table}

The fixed templates are intentionally simple, so we next allow the attack to adapt to the planner. In adaptive MissionPAIR, a second Claude instance receives the previous target response and produces a revised injection for the next round. Table~\ref{tab:pilot-adaptive-haiku} shows that the adaptive procedure produces 12 mission-violating outputs in 60 trials, corresponding to 20.0\% compliance. More importantly than the small change in the pooled rate, the attack reaches categories that the fixed templates did not: goal substitution succeeds once and provenance confusion succeeds twice, while object substitution remains the dominant failure mode. This demonstrates why an adaptive evaluation is useful even when its overall success rate is not dramatically larger; it explores a different part of the mission space. AuthGuard-R rejects all 12 unauthorized actions. Across the two Claude experiments, the planner therefore produces 23 mission-violating actions and the authorization gate rejects all 23.

\begin{table}[t]
\centering
\caption{Adaptive MissionPAIR against Claude Haiku~4.5.}
\label{tab:pilot-adaptive-haiku}
\resizebox{\columnwidth}{!}{%
\begin{tabular}{lcc}
\toprule
Category & Planner complied & Blocked by AuthGuard-R \\
\midrule
Goal substitution & 1/10~[0.02, 0.40] & 1/1 \\
Capability escalation & 0/10~[0.00, 0.28] & --- \\
Object substitution & 7/10~[0.40, 0.89] & 7/7 \\
Provenance confusion & 2/10~[0.06, 0.51] & 2/2 \\
Area expansion & 0/10~[0.00, 0.28] & --- \\
Parameter escalation & 2/10~[0.06, 0.51] & 2/2 \\
\midrule
Pooled (60 trials) & 12/60 = 20.0\%~[0.12, 0.32] & 12/12 \\
\bottomrule
\end{tabular}}
\end{table}

The Claude runs also exposed an important implementation issue that is consistent with, rather than contrary to, the formal model. In an early return-to-base scenario, the mission policy did not encode the intended trusted low-battery condition. The gate therefore accepted the action because the encoded policy actually permitted it. After the missing state guard was added, the captured action was replayed and rejected. This case illustrates the role of the formal assumptions: deterministic enforcement can faithfully enforce the signed policy, but it cannot infer a restriction that the mission compiler failed to encode. For this reason, mission construction and action normalization remain part of the trusted implementation boundary.

\subsubsection{Qwen2.5~7B}
\label{sec:pilot-qwen-fixed}
\label{sec:pilot-qwen-adaptive}

The Qwen experiment provides a stronger stress test because the open-source planner is much more willing to follow the injected mission modifications. Table~\ref{tab:pilot-qwen-fixed} reports the fixed-template results. Qwen2.5~7B complies in 50 of 60 trials, or 83.3\%. Goal substitution, object substitution, provenance confusion, area expansion, and parameter escalation each succeed in all ten seeds; only capability escalation produces no compliant output. The significance of this result is different from the Claude result. Here the planner itself provides little protection: five of the six tested attack categories reach the authorization gate in every seed. Nevertheless, AuthGuard-R rejects all 50 unauthorized actions. The experiment therefore tests the gate under substantially heavier adversarial pressure and shows that the enforcement decision does not rely on the target model being strongly aligned against the injected instruction.

\begin{table}[t]
\centering
\caption{Fixed-template attacks against Qwen2.5~7B through Ollama.}
\label{tab:pilot-qwen-fixed}
\resizebox{\columnwidth}{!}{%
\begin{tabular}{lcc}
\toprule
Category & Planner complied & Blocked by AuthGuard-R \\
\midrule
Goal substitution & 10/10~[0.72, 1.00] & 10/10 \\
Capability escalation & 0/10~[0.00, 0.28] & --- \\
Object substitution & 10/10~[0.72, 1.00] & 10/10 \\
Provenance confusion & 10/10~[0.72, 1.00] & 10/10 \\
Area expansion & 10/10~[0.72, 1.00] & 10/10 \\
Parameter escalation & 10/10~[0.72, 1.00] & 10/10 \\
\midrule
Pooled (60 trials) & 50/60 = 83.3\%~[0.72, 0.91] & 50/50 \\
\bottomrule
\end{tabular}}
\end{table}

Table~\ref{tab:pilot-qwen-adaptive} reports the adaptive Qwen experiment, in which a second Qwen2.5~7B instance generates and refines the injection after observing the target's previous response. The adaptive attacker obtains 36 mission-violating outputs in 60 trials, or 60.0\%. Provenance confusion and parameter escalation succeed in all ten seeds, area expansion succeeds in eight, goal substitution in six, and object substitution in two. The pooled adaptive rate is lower than the fixed-template rate, so these results should not be interpreted as showing that adaptivity is automatically stronger for every model. Instead, they show that a different attack-generation process produces a different distribution of failures. For the defense question, the important denominator is the 36 actions that actually violate the signed mission, and AuthGuard-R rejects all 36. Taken together, the Qwen experiments generate 86 unauthorized planner outputs under two substantially different attack procedures, and none reaches execution through the authorization gate.

\begin{table}[t]
\centering
\caption{Adaptive MissionPAIR using Qwen2.5~7B as attacker and target.}
\label{tab:pilot-qwen-adaptive}
\resizebox{\columnwidth}{!}{%
\begin{tabular}{lcc}
\toprule
Category & Planner complied & Blocked by AuthGuard-R \\
\midrule
Goal substitution & 6/10~[0.31, 0.83] & 6/6 \\
Capability escalation & 0/10~[0.00, 0.28] & --- \\
Object substitution & 2/10~[0.06, 0.51] & 2/2 \\
Provenance confusion & 10/10~[0.72, 1.00] & 10/10 \\
Area expansion & 8/10~[0.49, 0.94] & 8/8 \\
Parameter escalation & 10/10~[0.72, 1.00] & 10/10 \\
\midrule
Pooled (60 trials) & 36/60 = 60.0\%~[0.47, 0.71] & 36/36 \\
\bottomrule
\end{tabular}}
\end{table}

\subsection{Cross-Model Interpretation and Significance}
\label{sec:pilot-crossmodel}
\label{sec:pilot-interpretation}

Table~\ref{tab:pilot-crossmodel} brings the four live experiments together and provides the clearest empirical answer to the paper's main question. The planner side is highly variable: Claude Haiku~4.5 complies in 18.3\% and 20.0\% of the fixed and adaptive trials, whereas Qwen2.5~7B complies in 83.3\% and 60.0\%. This large variation shows why the LLM itself should not be treated as the authorization mechanism. A change of model can dramatically change how often an injected instruction becomes an unauthorized proposal. The enforcement side is stable across the evaluated runs. Out of 240 live attack trials, 109 produce mission-violating planner outputs, and AuthGuard-R rejects all 109. The ``Blocked'' figures in Table~\ref{tab:pilot-crossmodel} are intentionally conditioned on the planner having produced an unauthorized action; this prevents model refusals from being incorrectly credited to the authorization gate. Adding the eleven hand-constructed protocol- and policy-level attacks gives 120 instantiated unauthorized actions rejected by the implementation. These measurements do not constitute a universal 100\% security claim, because the formal guarantee still depends on correct policy encoding, faithful normalization, trusted state, and an uncompromised gate. They do, however, provide positive experimental evidence for the specific theorem-backed architectural claim of this work: once an unauthorized action is represented correctly against the signed mission, the execution decision can be made independently of whether the LLM itself is robust or vulnerable to the injected instruction.

\begin{table}[t]
\centering
\caption{Cross-model summary. ``Blocked'' is conditional on the planner producing a mission-violating action.}
\label{tab:pilot-crossmodel}
\resizebox{\columnwidth}{!}{%
\begin{tabular}{llcc}
\toprule
Target planner & Attack method & Planner complied & Blocked \\
\midrule
Claude Haiku~4.5 & Fixed template & 11/60 (18.3\%) & 11/11 \\
Claude Haiku~4.5 & Adaptive & 12/60 (20.0\%) & 12/12 \\
Qwen2.5~7B & Fixed template & 50/60 (83.3\%) & 50/50 \\
Qwen2.5~7B & Adaptive & 36/60 (60.0\%) & 36/36 \\
\midrule
All live trials & --- & 109/240 (45.4\%) & 109/109 \\
\bottomrule
\end{tabular}}
\end{table}

The progression of the implementation results is therefore important. Tables~\ref{tab:pilot-results} and~\ref{tab:pilot-category} first establish under controlled conditions that physical safety alone does not protect mission integrity and that the implemented authorization checks cover several distinct policy dimensions. Tables~\ref{tab:pilot-live-llm} and~\ref{tab:pilot-adaptive-haiku} then show that a comparatively resistant proprietary planner can still cross the mission boundary, while Tables~\ref{tab:pilot-qwen-fixed} and~\ref{tab:pilot-qwen-adaptive} show the same problem under a much more attack-compliant open-source planner. Finally, Table~\ref{tab:pilot-crossmodel} shows that although planner susceptibility varies sharply, the external authorization result remains the same for every observed unauthorized output. This is the practical significance of the paper's dual-gate design. The safety gate answers whether an action is acceptable in the physical state, whereas AuthGuard-R answers whether the operator authorized that action in the first place. The experiments do not replace the formal proofs; rather, they instantiate the assumptions of those proofs on concrete attack traces and show that the implemented gate produces the predicted rejection behavior while preserving the benign controls in the validation suite.\\
\noindent {\bf Discussion}. The experiments support the main distinction of this paper: physical safety and mission authorization are related but different properties. An action may be collision-free, within speed limits, and otherwise physically acceptable while still changing the destination, object, sensing scope, or numerical bounds selected by the operator. The hand-constructed study makes this separation explicit, and the live-model experiments show that such mission changes can also be produced by real LLM planners under injected observations. The large difference between Claude Haiku~4.5 and Qwen2.5~7B compliance rates further shows why authorization should not be delegated to the planner itself; the same deterministic mission check can be applied even when model refusal behavior changes substantially. AuthGuard-R is therefore intended to complement rather than replace a safety gate: the authorization layer checks whether an action remains within the signed mission, while the safety layer checks whether that authorized action is acceptable in the current physical state. The 109/109 live-action rejections should be read as evidence that the present implementation enforced the encoded policy on the evaluated attacks, not as a claim that cryptography can correct a wrongly encoded mission or an incorrectly normalized physical action.

\section{Conclusion}
This paper identifies a security gap in LLM-controlled robotics that is not captured by physical safety alone: a robot may execute an action that is physically acceptable while still violating the operator's authenticated mission. We formalized this problem as safety-compliant mission hijacking, introduced MissionPAIR to generate mission-changing attacks that remain compatible with a physical-safety baseline, and proposed AuthGuard-R as a deterministic authorization boundary between the probabilistic planner and physical execution. The formal analysis establishes authorization soundness, mission non-escalation, replay resistance, robot binding, provenance separation, approval security, audit tamper evidence, and composition with an independent safety gate under the stated assumptions, while the implementation follows the same separation by normalizing each planner action and checking it against the signed mission before execution. Across 240 live attack trials using Claude Haiku~4.5 and Qwen2.5~7B, the planners produced 109 mission-violating actions and AuthGuard-R rejected all 109; together with eleven hand-constructed protocol- and policy-level attacks, the implementation rejected 120 instantiated unauthorized actions while preserving the benign controls in the validation suite. The large difference between the two planners' attack-compliance profiles reinforces the central design choice of keeping authority outside the LLM. The resulting principle is simple: an LLM may decide how to perform an authorized task, but it should not be able to decide what it is authorized to do. Future work will move the authorization mechanism from the symbolic action pipeline into a full simulator and ROS~2 execution loop so that long-horizon plans, replanning after state changes, actuator feedback, and interaction with realistic physical-safety mechanisms such as RoboGuard can be evaluated. Reproducing RoboPAIR, RoboGuard, and BadRobot at their intended scale would provide stronger comparison with established attack and defense frameworks.

\subsection*{Acknowledgment}
AI tools were used solely for grammar correction and language refinement. Vikas Srivastava acknowledges the support received from the ANRF-PMECRG project with Ref.
ANRF/ECRG/2025/002808/PMS and NIT Warangal Research Seed Grant.

\balance


\begin{thebibliography}{99}

\bibitem{robopair}
A. Robey, Z. Ravichandran, V. Kumar, H. Hassani, and G. J. Pappas,
``Jailbreaking LLM-Controlled Robots,''
\emph{Proc. IEEE International Conference on Robotics and Automation (ICRA)},
pp. 11948--11956, 2025.
doi: 10.1109/ICRA55743.2025.11128119.

\bibitem{roboguard}
Z. Ravichandran, A. Robey, V. Kumar, G. J. Pappas, and H. Hassani,
``Safety Guardrails for LLM-Enabled Robots,''
\emph{IEEE Robotics and Automation Letters},
vol. 11, no. 4, pp. 4649--4656, 2026.
doi: 10.1109/LRA.2026.3667488.

\bibitem{badrobot}
H. Zhang, C. Zhu, X. Wang, Z. Zhou, C. Yin, M. Li, L. Xue,
Y. Wang, S. Hu, A. Liu, P. Guo, and L. Y. Zhang,
``BadRobot: Jailbreaking Embodied LLM Agents in the Physical World,''
\emph{International Conference on Learning Representations (ICLR)}, 2025;
arXiv:2407.20242.
doi: 10.48550/arXiv.2407.20242.

\bibitem{vulnerability}
X. Wu, S. Chakraborty, R. Xian, J. Liang, T. Guan, F. Liu,
B. M. Sadler, D. Manocha, and A. S. Bedi,
``On the Vulnerability of LLM/VLM-Controlled Robotics,''
\emph{Proc. IEEE/RSJ International Conference on Intelligent Robots
and Systems (IROS)}, pp. 1914--1921, 2025.
doi: 10.1109/IROS60139.2025.11246863.

\bibitem{ripa}
N. Dorzhiev,
``RIPA: Sensory-Vector Prompt Injection Attacks on LLM-Controlled ROS 2 Robots,''
arXiv:2606.28649, 2026.
doi: 10.48550/arXiv.2606.28649.

\bibitem{blindfold}
X. Huang, Q. Yang, L. Shen, Z. Ma, and Y. Zheng,
``Jailbreaking Embodied LLMs via Action-Level Manipulation,''
\emph{Proc. ACM/IEEE International Conference on Embedded Artificial
Intelligence and Sensing Systems (SenSys)}, 2026.
doi: 10.1145/3774906.3802758.

\bibitem{semanticdos}
J. Steinberg and O. Gal,
``Semantic Denial of Service in LLM-Controlled Robots,''
arXiv:2604.24790, 2026.
doi: 10.48550/arXiv.2604.24790.

\bibitem{safegate}
I. Obi, V. L. N. Venkatesh, W. Wang, R. Wang, D. Suh,
T. I. Amosa, W. Jo, and B.-C. Min,
``Pre-Execution Safety Gate \& Task Safety Contracts for
LLM-Controlled Robot Systems,''
arXiv:2604.05427, 2026.
doi: 10.48550/arXiv.2604.05427.

\bibitem{runtimegov}
X. Qin, S. Luan, J. See, Z. Boukhers, C. Yang, and Z. Li,
``Harnessing Embodied Agents: Runtime Governance for
Policy-Constrained Execution,''
arXiv:2604.07833, 2026.
doi: 10.48550/arXiv.2604.07833.

\bibitem{modularguardrails}
J. Kim, W. Chen, D. Soleymanzadeh, Y. Ding, X. Gao, Z. Tu,
R. Zhang, F. Fei, S. Veer, Y. Lyu, M. Zheng, and Y. Gu,
``Position: Modular Safety Guardrails Are Necessary for
Foundation-Model-Enabled Robots in the Real World,''
\emph{Proc. International Conference on Machine Learning (ICML),
Position Track}, 2026; arXiv:2602.04056.
doi: 10.48550/arXiv.2602.04056.

\bibitem{robojailbench}
D. Yeke, Y. Zhou, L. Y. Lin, H. Cai, A. Bianchi, and Z. B. Celik,
``RoboJailBench: Benchmarking Adversarial Attacks and Defenses
in Embodied Robotic Agents,''
arXiv:2605.19328, 2026.
doi: 10.48550/arXiv.2605.19328.

\bibitem{safeagentbench}
S. Yin, X. Pang, Y. Ding, M. Chen, Y. Bi, Y. Xiong, W. Huang,
Z. Xiang, J. Shao, and S. Chen,
``SafeAgentBench: A Benchmark for Safe Task Planning of Embodied LLM Agents,''
arXiv:2412.13178, 2024.
doi: 10.48550/arXiv.2412.13178.

\bibitem{govbench}
X. Qin, S. Luan, J. See, C. Yang, and Z. Li,
``EmbodiedGovBench: A Benchmark for Governance, Recovery, and
Upgrade Safety in Embodied Agent Systems,''
arXiv:2604.11174, 2026.
doi: 10.48550/arXiv.2604.11174.

\bibitem{trustsurvey}
X. Huang, S. Karthick V B, T. Chen, M. Bryson, T. Chaffey,
H. Chen, K.-K. R. Choo, and I. R. Manchester,
``Trust in LLM-Controlled Robotics: A Survey of Security Threats,
Defenses and Challenges,''
arXiv:2601.02377, 2025.
doi: 10.48550/arXiv.2601.02377.

\bibitem{owasp}
OWASP Foundation,
``LLM Prompt Injection Prevention Cheat Sheet,''
\emph{OWASP Cheat Sheet Series}.
[Online]. Available:
\url{https://cheatsheetseries.owasp.org/cheatsheets/LLM_Prompt_Injection_Prevention_Cheat_Sheet.html}.
Accessed: Sep. 25, 2026.

\bibitem{rossecurity}
Open Robotics,
``Understanding the Security Keystore,''
\emph{ROS 2 Documentation: Jazzy}.
[Online]. Available:
\url{https://docs.ros.org/en/ros2_documentation/jazzy/Tutorials/Advanced/Security/The-Keystore.html}.
Accessed: Sep. 25, 2026.

\bibitem{jwt}
M. Jones, J. Bradley, and N. Sakimura,
``JSON Web Token (JWT),''
RFC 7519, Internet Engineering Task Force, May 2015.
doi: 10.17487/RFC7519.

\bibitem{fips204}
National Institute of Standards and Technology,
``Module-Lattice-Based Digital Signature Standard,''
FIPS 204, Aug. 2024.
doi: 10.6028/NIST.FIPS.204.

\bibitem{rfc8032}
S. Josefsson and I. Liusvaara,
``Edwards-Curve Digital Signature Algorithm (EdDSA),''
RFC 8032, Internet Research Task Force, Jan. 2017.
doi: 10.17487/RFC8032.

\bibitem{gmr}
S. Goldwasser, S. Micali, and R. L. Rivest,
``A Digital Signature Scheme Secure Against Adaptive Chosen-Message Attacks,''
\emph{SIAM Journal on Computing},
vol. 17, no. 2, pp. 281--308, 1988.
doi: 10.1137/0217017.

\end{thebibliography}
\end{document}